\documentclass{article} % For LaTeX2e
\usepackage{iclr2026_conference,times}

%%%%% NEW MATH DEFINITIONS %%%%%

\usepackage{amsmath,amsfonts,bm}

% Mark sections of captions for referring to divisions of figures

% Highlight a newly defined term

% Figure reference, lower-case.

% Figure reference, capital. For start of sentence

% Section reference, lower-case.

% Section reference, capital.

% Reference to two sections.

% Reference to three sections.

% Reference to an equation, lower-case.
\def\eqref#1{equation~\ref{#1}}
% Reference to an equation, upper case

% A raw reference to an equation---avoid using if possible

% Reference to a chapter, lower-case.

% Reference to an equation, upper case.

% Reference to a range of chapters

% Reference to an algorithm, lower-case.

% Reference to an algorithm, upper case.

% Reference to a part, lower case

% Reference to a part, upper case

\def\1{\bm{1}}

% Random variables

% rm is already a command, just don't name any random variables m

% Random vectors

% Elements of random vectors

% Random matrices

% Elements of random matrices

% Vectors

% Elements of vectors

% Matrix

% Tensor
\DeclareMathAlphabet{\mathsfit}{\encodingdefault}{\sfdefault}{m}{sl}
\SetMathAlphabet{\mathsfit}{bold}{\encodingdefault}{\sfdefault}{bx}{n}

% Graph

% Sets

% Don't use a set called E, because this would be the same as our symbol
% for expectation.

% Entries of a matrix

% entries of a tensor
% Same font as tensor, without \bm wrapper

% The true underlying data generating distribution

% The empirical distribution defined by the training set

% The model distribution

% Stochastic autoencoder distributions

 % Laplace distribution

% Wolfram Mathworld says $L^2$ is for function spaces and $\ell^2$ is for vectors
% But then they seem to use $L^2$ for vectors throughout the site, and so does
% wikipedia.

 % See usage in notation.tex. Chosen to match Daphne's book.

\usepackage{xcolor}
\definecolor{mydarkblue}{rgb}{0,0.08,0.45} 

\usepackage[
  colorlinks=true,
  linkcolor=mydarkblue,
  citecolor=mydarkblue,
  urlcolor=mydarkblue
]{hyperref}

\usepackage{url}
\usepackage{wrapfig}

% For theorems and such
\usepackage{amsmath}
\usepackage{mathtools}
\usepackage{amsthm}
\usepackage{amssymb}
\newtheorem{prop}{Proposition}

% For table
\usepackage{booktabs} % for professional tables
\usepackage{tabularray}
\usepackage{tabularx}
\usepackage{makecell}
\usepackage{wasysym}
\usepackage{multirow}
\usepackage{array}

% For itemize
\usepackage{enumitem}       % for itemize

% if you use cleveref..
% \usepackage[capitalize,noabbrev]{cleveref}
\usepackage[capitalize,nameinlink]{cleveref}
\crefname{section}{Sec.}{Secs.}
\crefname{algorithm}{Alg.}{Algs.}
\crefname{appendix}{App.}{Apps.}
\crefname{definition}{Def.}{Defs.}
\crefname{table}{Table}{Tables}

%%%%%%%% For pseudo code block %%%%%%%
% For pseudo code
\usepackage{listings}
\definecolor{vibrantred}{HTML}{E74C3C}
\definecolor{vibrantblue}{HTML}{3498DB}
\definecolor{vibrantgreen}{HTML}{2ECC71}
\definecolor{vibrantyellow}{HTML}{F1C40F}
\definecolor{vibrantpurple}{HTML}{9B59B6}
\definecolor{vibrantorange}{HTML}{E67E22}

\definecolor{codegreen}{rgb}{0,0.6,0}
\definecolor{codegray}{rgb}{0.5,0.5,0.5}
\definecolor{codepurple}{rgb}{0.58,0,0.82}
\definecolor{backcolour}{rgb}{0.95,0.95,0.92}

\lstdefinestyle{mystyle}{
    backgroundcolor=\color{backcolour},   
    commentstyle=\color{codegreen},
    keywordstyle=\color{magenta},
    numberstyle=\tiny\color{codegray},
    stringstyle=\color{codepurple},
    basicstyle=\ttfamily\footnotesize,
    breakatwhitespace=false,         
    breaklines=true,                 
    captionpos=b,                    
    keepspaces=true,                 
    numbers=left,                    
    numbersep=5pt,                  
    showspaces=false,                
    showstringspaces=false,
    showtabs=false,                  
    tabsize=2
}

\lstset{style=mystyle}
%%%%%%% End of pseudo code block %%%%%%%%

%%%%%%%%%%%%%%%%%%%%%%%%%%%%%%%%
% THEOREMS
%%%%%%%%%%%%%%%%%%%%%%%%%%%%%%%%
\theoremstyle{plain}
\newtheorem{theorem}{Theorem}[section]

\theoremstyle{definition}
\newtheorem{definition}[theorem]{Definition}

\theoremstyle{remark}

\usepackage{algorithm}      % for algorithm
\usepackage{algpseudocode}

\usepackage{subfigure}

%%%%%%%%%%%%%%%%%%%%%%
% Color
%%%%%%%%%%%%%%%%%%%%%%
%\usepackage{xcolor}
\definecolor{dustyrose}{HTML}{D4A5A5}
\definecolor{sagegreen}{HTML}{A8C3A0}
\definecolor{skyblue}{HTML}{A4C2F4}
\definecolor{lavender}{HTML}{C3A6D4}
\definecolor{paleyellow}{HTML}{F7E8A4}

\definecolor{vibrantred}{HTML}{E74C3C}
\definecolor{vibrantblue}{HTML}{3498DB}
\definecolor{vibrantgreen}{HTML}{2ECC71}
\definecolor{vibrantyellow}{HTML}{F1C40F}
\definecolor{vibrantpurple}{HTML}{9B59B6}
\definecolor{vibrantorange}{HTML}{E67E22}

% Define checkmark and cross symbols with colours
\usepackage{pifont}
\newcommand{\cmark}{\textcolor{green!50!black}{\ding{51}}}%
\newcommand{\xmark}{\textcolor{black!30!red}{\ding{55}}}%
\usepackage{colortbl}
\colorlet{highlight}{dustyrose!30}
\newcolumntype{H}{>{\columncolor{highlight}}c}

% Redefine paragraph
\renewcommand{\paragraph}[1]{\noindent\textbf{#1}~~}
\renewcommand{\mid}{\,|\,}

\usepackage{titletoc} % for content list

\title{Efficient Reinforcement Learning by Guiding\\ World Models with Non-Curated Data}

% Authors must not appear in the submitted version. They should be hidden
% as long as the \iclrfinalcopy macro remains commented out below.
% Non-anonymous submissions will be rejected without review.
% \iclrfinalcopy

\author{%
Yi Zhao$^{\dag~1}$ 
        ~Aidan Scannell$^{1,2}$
        ~Wenshuai Zhao$^{1,3}$
        ~Yuxin Hou$^{4}$
        ~Tianyu Cui$^{1,5}$\\
        \textbf{Le Chen$^{6}$
        ~Dieter Büchler$^{6,7,8,9}$
        ~Arno Solin$^{1,3}$
        ~Juho Kannala$^{1,10}$
        ~Joni Pajarinen$^{1}$}\\      
$^{1}$Aalto University
~$^{2}$University of Edinburgh
~$^{3}$ELLIS Institute Finland
~$^{4}$Deep Render \\
$^{5}$Imperial College London 
~$^{6}$Max Planck Institute for Intelligent Systems
~$^{7}$CIFAR AI Chair \\
$^{8}$University of Alberta
~$^{9}$Alberta Machine Intelligence Institute (Amii)
~$^{10}$University of Oulu
}

\makeatletter
\def\thanks#1{\protected@xdef\@thanks{\@thanks
        \protect\footnotetext{#1}}}
\makeatother

\thanks{$^{\dag}$ Correspondence to \texttt{yi.zhao@aalto.fi}. Code and datasets: \href{https://github.com/zhaoyi11/ncrl}{https://github.com/zhaoyi11/ncrl}.}

\iclrfinalcopy
\begin{document}

\maketitle

\begin{abstract}
Leveraging offline data is a promising way to improve the sample efficiency of online reinforcement learning (RL).
This paper expands the pool of usable data for offline-to-online RL by leveraging abundant non-curated data that is reward-free, of mixed quality, and collected across multiple embodiments.
Although learning a world model appears promising for utilizing such data, we find that naive fine-tuning fails to accelerate RL training on many tasks.
Through careful investigation, we attribute this failure to the distributional shift between offline and online data during fine-tuning.
To address this issue and effectively use the offline data, we propose two techniques: \emph{i)} experience rehearsal and \emph{ii)} execution guidance.
With these modifications, the non-curated offline data substantially improves RL's sample efficiency.
Under limited sample budgets, our method achieves nearly twice the aggregate score of learning-from-scratch baselines across 72 visuomotor tasks spanning 6 embodiments.
On challenging tasks such as locomotion and robotic manipulation, it outperforms prior methods that utilize offline data by a decent margin.
\end{abstract}

\section{Introduction} \label{intro}
Leveraging offline data offers a promising way to improve the sample efficiency of reinforcement learning (RL).
Prior work has focused primarily on utilizing curated offline data labeled with rewards~\citep{levine2020offline,kumar2020conservative,fujimoto2021minimalist,kumar2022pre}, which is expensive and laborious to obtain.
For instance, leveraging offline datasets for new robotic manipulation tasks requires retrospectively annotating image-based data with rewards.
We instead propose expanding the pool of usable offline data by utilizing abundant non-curated data that is reward-free, of mixed quality, and collected across multiple embodiments.
This leads to our main research question:\looseness-1
\begin{center}
    \textit{How can we effectively leverage non-curated offline data for efficient RL?}
\end{center}
Typical offline-to-online RL methods~\citep{lee2022offline,zhao2022adaptive,yu2023actor,nakamoto2024cal,nair2020awac} fail to utilize non-curated offline data due to their assumption of structured data with rewards. 
While pre-training visual encoders~\citep{schwarzer2021pretraining,nair2022r3m,parisi2022unsurprising,xiao2022masked,yang2021representation,shang2024theia} is a common approach to utilize non-curated offline datasets, it fails to fully leverage the rich information, such as dynamics models, informative states, and action priors for policy learning.
On the other hand, learning world models from offline data appears promising for utilizing the non-curated dataset. 
However, prior work has explored world model training primarily in settings with known rewards~\citep{lu2022challenges,rafailov2023moto,hansen2023td} or expert demonstrations~\citep{zhu2024irasim,zhou2024robodreamer,gao2024flip} or focused solely on visual prediction~\citep{opensora,zhu2024sora}.
Recent approaches~\citep{seo2022reinforcement,wu2024pre,wu2025ivideogpt} have developed novel architectures for world model pre-training using in-the-wild action-free data, but paid limited attention to the fine-tuning process.
As a result, despite being trained on massive datasets, these methods show only marginal improvements over training-from-scratch baselines.
Additionally, due to the computational costs of RL experiments, previous work~\citep{wu2024pre,wu2025ivideogpt} evaluated only on a small set of tasks, leaving the effectiveness of the learned world model unclear on broader tasks.
In contrast, we extensively evaluate our method on 72 visuomotor control tasks spanning both locomotion and robotic manipulation, demonstrating consistent improvements over existing approaches.\looseness-2

Through experiments, we observe that naively fine-tuning a world model fails to improve RL's sample efficiency on many tasks. 
With careful investigation, we identify the root cause as a distributional shift between offline data used for pre-training and online data used for RL fine-tuning. 
Specifically, when the offline data distribution does not sufficiently cover the data distribution of downstream tasks, the pre-trained world models struggle to benefit policy learning due to this distribution mismatch, shown in~\cref{fig:motivation}. 
Building on these insights, we propose using non-curated offline data in \textit{both} pre-training and fine-tuning stages, in contrast to previous methods that only consider the offline data for world model pre-training~\citep{wu2025ivideogpt,yuan2022pre,rajeswar2023mastering}. 
To this end, we propose a pipeline named Non-curated offline data for efficient RL (NCRL). 
In the pre-training stage, NCRL learns a task-agnostic world model from non-curated offline data that is reward-free, mix-quality and task-agnostic.
In the fine-tuning stage, NCRL reuses this data through \textit{experience rehearsal} and \textit{execution guidance} to mitigate distributional shift by retrieving task-relevant trajectories and to prompt exploration by steering the agent toward regions where the world model has high confidence.\looseness-1

\begin{figure*}[t] 
\centering
\includegraphics[width=0.9\textwidth]{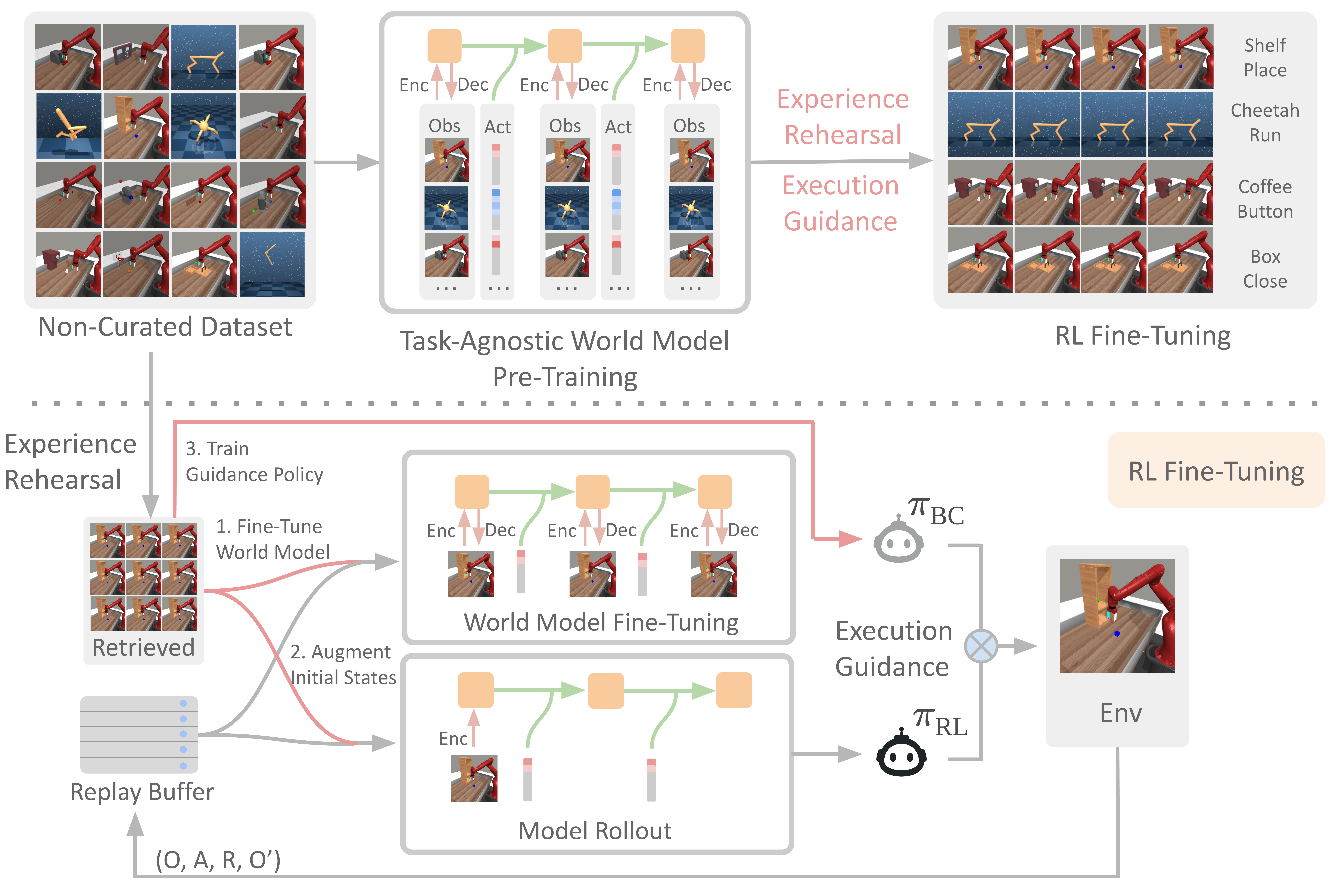}
\caption{\textbf{Overview of NCRL (Non-curated offline data for efficient RL).} 
NCRL leverages non-curated offline data—reward-free, mixed-quality, and multi-embodiment—to enable efficient RL. It uses this data to pretrain a task-agnostic world model, and then, during fine-tuning, to reduce distributional shift and guide exploration through experience rehearsal and execution guidance.
}
\label{fig:overview}
\vspace{-0.5cm}
\end{figure*}

Equipped with our proposed techniques,  NCRL demonstrates strong performance across a diverse set of tasks.
Specifically, under a limited sample budget (150k samples), NCRL achieves almost double the aggregate score of learning-from-scratch baselines (DrQ-v2 and DreamerV3), while matching their performance achieved with larger sample budgets.
On representative challenging tasks, NCRL outperforms baselines that leverage offline data as well as state-of-the-art methods using pre-trained world models by a significant margin.
Additionally, without any modifications, we show that NCRL improves task adaptation, enabling agents to efficiently adapt their skills to new tasks.
To summarize, our contributions are:
\begin{itemize}[noitemsep]
    \vspace{-0.3cm}
    \item[\textbf{C1}] We propose a more realistic setting for leveraging offline data that consists of reward-free and mixed-quality multi-embodiment data.
    \vspace{0.1cm}
    \item[\textbf{C2}] We demonstrate that naive world model fine-tuning fails on many tasks due to distributional shift between pre-training and fine-tuning data.
    \vspace{0.1cm}
    \item[\textbf{C3}] We propose two techniques, experience rehearsal and execution guidance, to mitigate the distributional gap and encourage exploration during RL fine-tuning.
     \vspace{0.1cm}
    \item[\textbf{C4}] We present NCRL, which leverages non-curated offline data in both pre-training and fine-tuning stages and clearly outperforms existing approaches across a diverse set of tasks.
\end{itemize}

\section{Related Work}
In this section, we review RL methods that leverage offline data in different ways. See~\cref{appendix:related_work} for extended discussion and~\cref{tab:related_work} for a comparison.

\paragraph{RL with task-specific offline datasets}
Offline RL trains agents purely from offline data by constraining divergence from behavior policies~\citep{kumar2020conservative,fujimoto2021minimalist,kumar2019stabilizing,wu2019behavior,kostrikov2021offline,kostrikov2021boffline,uchendu2023jump}, but performance depends heavily on dataset quality~\citep{yarats2022don}. 
Offline-to-online RL~\citep{lee2022offline,zhao2022adaptive,yu2023actor,nair2020awac,rafailov2023moto} addresses this by fine-tuning the agent via interaction with the environment. 
MOTO~\citep{rafailov2023moto} proposes a model-based offline-to-online RL method with reward-labeled data, but requires model-based value expansion, policy regularization, and controlling epistemic uncertainty to conduct stable online RL training.
Recent work~\citep{ball2023efficient,li2023accelerating} demonstrates promising results by leveraging offline data, but it still assumes reward-labeled offline data or relies on near-expert data of the target tasks~\citep{li2023accelerating}, while we focus on a more general setting assuming reward-free, mixed-quality and task-agnostic offline data.

\paragraph{RL with multi-task offline datasets}
Recent work has explored multi-task offline RL~\citep{kumar2022pre,hansen2023td,julian2020never,kalashnikov2021mt,yu2021conservative}, but requires known rewards.
PWM~\citep{georgiev2024pwm} trains a world model for multi-task RL but is limited to state-based inputs and reward-labeled data.
To handle unknown rewards, approaches like human labeling~\citep{cabi2019scaling,singh2019end}, inverse RL~\citep{ng2000algorithms,abbeel2004apprenticeship}, or generative adversarial imitation learning~\citep{ho2016generative} can be used, though these require human labor or expert demonstrations.
\citet{yu2022leverage} assigns zero rewards to unlabeled data, which introduces additional bias.
Apart from these, there is a line of work that focuses on representation learning or dynamics model training from in-the-wild data~\citep{schwarzer2021pretraining,parisi2022unsurprising,yang2021representation,yuan2022pre,stooke2021decoupling,shah2021rrl,wang2022vrl3,sun2023smart,ze2023visual,ghosh2023reinforcement,wu2025ivideogpt,wu2023pre} but fails to utilize rich information in the dataset at the fine-tuning stage.

%%%%%%% Table for policy learning comparison %%%%%%%%%
\newcolumntype{Y}{>{\centering\arraybackslash}X}
\begin{table*}[t]
    \caption{Comparison with different policy learning methods that leverage offline data.}
    \label{tab:related_work}
    \centering

    \setlength{\tabcolsep}{4pt}
    \begin{tabularx}{\textwidth}{lccccH}
        \specialrule{1pt}{1pt}{2.5pt}
        & \bf Offline RL & \bf Off2On RL & \bf RLPD & \bf MT Offline RL & \bf NCRL (ours) \\
        \specialrule{1pt}{1pt}{2.5pt}
        Reward-free offline data & \xmark & \xmark & \xmark  & \xmark & \cmark \\
        Non-expert offline data & \cmark & \cmark & \cmark  & \cmark & \cmark \\
        X-embodiment offline data & \xmark & \xmark & \xmark  & \cmark & \cmark \\
        Continual improvement & \xmark & \cmark & \cmark & \xmark & \cmark \\
        Training stability & \xmark & \xmark & \cmark & \xmark & \cmark \\
        \specialrule{1pt}{1pt}{2pt}
    \end{tabularx}
\end{table*}
%%%%%%% Table Ends Here %%%%%%%%

%%%%%%%%%%%%%%%%%%%%%%%%%%%%%%%%%%%%%%%%%%%%%%%%%%%%%%%%%%%%%%%%%%
%%%%%%%%%%%%%%%%%%%        Method               %%%%%%%%%%%%%%%%%%
%%%%%%%%%%%%%%%%%%%%%%%%%%%%%%%%%%%%%%%%%%%%%%%%%%%%%%%%%%%%%%%%%%

\section{Methods}

In this section, we detail our two-stage approach, which consists of {\em (i)} world model pre-training, which learns a multi-task \& embodiment world model, given offline data, which rather importantly, includes reward-free and mixed-quality data, and {\em (ii)} RL-based fine-tuning which leverages the pre-trained world model, non-curated offline data, and online interaction in an offline-to-online fashion. See \cref{fig:overview} for the overview and \cref{alg:ncrl} for the full algorithm.

\subsection{Problem Setup}
In this paper, we assume the agent has access to a non-curated but in-domain offline dataset $\mathcal{D}_{\text{off}}$ with three key characteristics: {\em (i)} trajectories lack reward labels $r^i_t$, {\em (ii)} data quality is mixed, and {\em (iii)} data comes from multiple embodiments.
During fine-tuning, the agent interacts with the environment to collect labeled trajectories $\tau_{\text{on}}^i = \{o^i_t, a^i_t, r^i_t\}_{t=1}^T$ and stores them in an online dataset $\mathcal{D}_{\text{on}} = \{\tau_{\text{on}}^i\}_{i=1}^{N_{\text{on}}}$.
Our goal is to learn a high-performance policy by leveraging both $\mathcal{D}_{\text{off}}$ and $\mathcal{D}_{\text{on}}$ while minimizing the required online interactions $N_{\text{on}}$.

\subsection{Multi-Embodiment World Model Pre-training} 
\label{method:pretrain}
 
During pre-training, rather than training separate models per task as in previous work~\citep{hafner2019dream,hafner2020mastering,hafner2023mastering}, we train one world model per benchmark and demonstrate that a single multi-task \& embodiment world model can effectively leverage non-curated data.

Since our primary goal is enabling RL agents to use non-curated offline data rather than proposing a new architecture, we adopt the widely-used recurrent state space model (RSSM)~\citep{hafner2019learning} with several modifications: {\em (i)} removal of task-related losses, {\em (ii)} zero-padding of actions to unify dimensions across embodiments, and {\em (iii)} scaling the model to 280M parameters. 
With these changes, we show that RSSMs can successfully learn the dynamics of multiple embodiments and can be fine-tuned for various tasks.

Our first stage pre-trains the following components:
\begin{align*}
\text{Sequence model} &: h_t = f_\theta (h_{t-1}, z_{t-1}, a_{t-1}) 
~~~~~~~~~~~~~~~\text{Encoder}: z_t \sim q_\theta(z_t \mid h_t, o_t) \\
\text{Dynamics predictor} &: \hat{z}_t \sim p_\theta(\hat{z}_t \mid h_t) 
~~~~~~~~~~~~~~~~~~~~~~~~~~~~~~~~~~\text{Decoder}: \hat{o}_t \sim d_\theta(\hat{o}_t \mid h_t, z_t).
\end{align*}

The models $f_\theta$, $q_\theta$, $p_\theta$ and $d_\theta$ are jointly optimized by minimizing:
\begin{align} \label{eq:wm objective}
    \mathcal{L}(\theta) &= \mathbb{E}_{(o_{t-1}, a_{t-1}, o_{t}) \sim \mathcal{D}_{\text{off}}, z_t \sim  q_\theta(\cdot \mid h_t, o_t)}
    \Big[ \frac{1}{T}\sum_{t=1}^T 
    \big( \beta_1 \mathcal{L}_{\text{pred}}(\theta) 
    + \beta_2 \mathcal{L}_{\text{dyn}}(\theta) 
    + \beta_3 \mathcal{L}_{\text{rep}}(\theta) \big)\Big],
    \end{align}
where $\beta_1, \beta_2, \beta_3$ are weights of each term. $\mathcal{L}_{\text{pred}}$ minimizes reconstruction error, $\mathcal{L}_{\text{dyn}}$ enables the sequence model and dynamics predictor to predict future latent states, and $\mathcal{L}_{\text{rep}}$ encourages the representation to be more predictable. They are given as:
\begin{equation}
    \begin{aligned}
    \mathcal{L}_{\text{pred}}(\theta) &= -\ln d_\theta (o_t \mid z_t, h_t)\\
    \mathcal{L}_{\text{dyn}}(\theta)  &= \max \Big(1, \mathrm{KL}\big(\mathrm{sg}(q_\theta(z_t \mid h_t, o_t) \| p_\theta (\hat{z}_t \mid h_t)\big)\Big) \\
    \mathcal{L}_{\text{rep}}(\theta) &= \max \Big(1, \mathrm{KL}\big(q_\theta(z_t \mid h_t, o_t) \| \mathrm{sg} (p_\theta(\hat{z}_t \mid h_t))\big)\Big),
    \end{aligned}
\end{equation}
where $\mathrm{sg}$ represents the stop-gradient operator and $\mathrm{KL}(p \| q)$ is the KL divergence.

While there is room to improve world model pre-training through recent self-supervised methods~\citep{eysenbach2023contrastive} or advanced architectures~\citep{vaswani2017attention,gu2021efficiently,MereuGenerative2025}, such improvements are orthogonal to our method and left for future work.

\subsection{RL-based Fine-Tuning with Rehearsal and Guidance}
In our fine-tuning stage, the agent interacts with the environment to collect new data $\tau_{\text{on}}^i = \{o^i_t, a^i_t, r^i_t\}_{t=0}^T$.
This data is used to learn a reward function $\hat{r}_t \sim r_\theta(\hat{r}_t\mid h_t, z_t)$ via supervised learning while fine-tuning the world model with~\cref{eq:wm objective}.
For simplicity, we denote the concatenation of $h_t$ and $z_t$ as $s_t = [h_t, z_t]$ and use $\hat{s}_t = [h_t, \hat{z}_t]$ when the latent state is predicted by the dynamics predictor $p_\theta$.
The actor and critic are trained using imagined trajectories $\hat{\tau}^i = \{ \hat{s}^i_t, a^i_t\}_{t=0}^T$ generated by rolling out the policy $\pi_\phi (a \mid s)$ with the sequence model $f_\theta$ and the dynamics predictor $p_\theta$.
The rollouts are initialized from states $p_0(s)$ sampled from the replay buffer.
The critic $v_{\phi}(V_t^\lambda \mid s_t)$ learns to approximate the distribution over the $\lambda$-return $V_t^\lambda$, calculated as:\looseness-1
\begin{equation} \label{eq:critic_return}
\underbrace{V^\lambda_t}_{\lambda-\text{return}} = \hat{r}_t + \gamma~\begin{cases} (1-\lambda) v_{t+1}^\lambda + \lambda V^\lambda_{t+1} &\text{if}~~ t < H \\ v_H^\lambda &\text{if}~~ t=H \end{cases}
\end{equation}
where $v_t^\lambda = \mathbb{E}[ v_\phi(\cdot \mid s_t)]$ denotes the expectation of the value distribution predicted by the critic.
The value function $v_\phi$ is trained by maximizing the log likelihood of the target $\lambda$-return, while the actor $\pi_\phi$ is optimized to maximize the $\lambda$-return by backpropagating gradients through the actions and latent states of the imagined trajectories:
\begin{align} \label{eq:actor}\mathcal{L}(v_\phi) = \mathbb{E}_{p_\theta,\pi_\phi} \left [ - \sum_{t=1}^{H-1} \mathrm{ln} \ v_{\phi}(V_t^{\lambda} \mid s_t) \right],
~~~\mathcal{L}(\pi_\phi) &= \mathbb{E}_{p_\theta, \pi_\phi}\left[ 
    \sum^{H-1}_{t=1} \left( -V_t^\lambda - \eta \cdot \mathbf{H}[a_t \mid s_t] \right) \right].
\end{align}
For further details, we refer to DreamerV3~\citep{hafner2023mastering}\footnote{We follow the policy update described in the first version of the paper (\hyperlink{https://arxiv.org/abs/2301.04104v1}{https://arxiv.org/abs/2301.04104v1}) and Dreamer v2~\citep{hafner2020mastering}.}.

\begin{figure*}[t]
\centering
\includegraphics[width=0.95\textwidth]{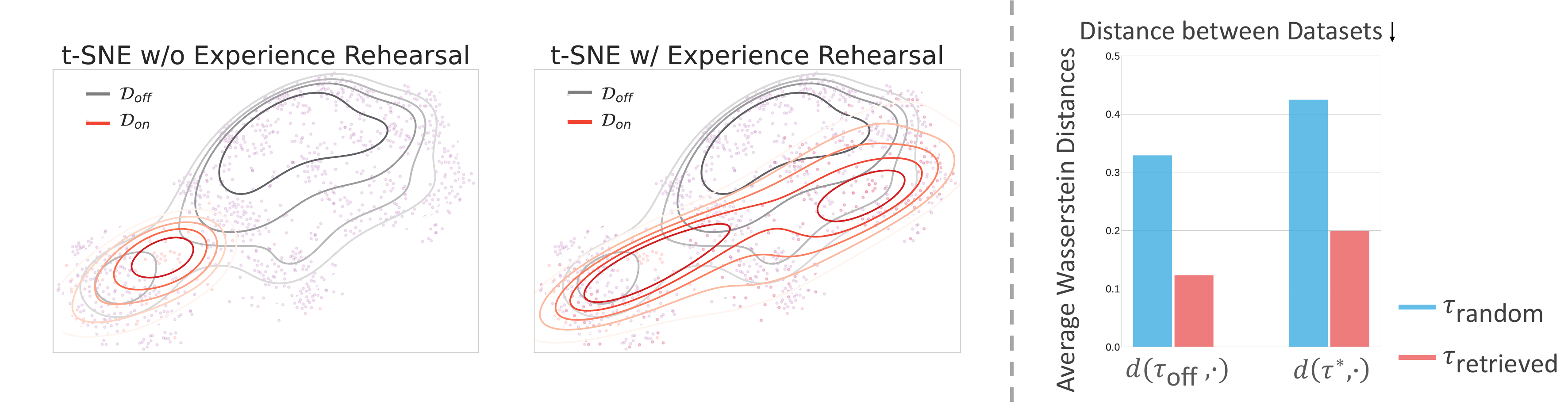}
\vskip -0.1in
\caption{\textbf{Visualization of Distribution Mismatch.} \textbf{Left:} At the early stage of fine-tuning, there is a distribution shift between offline data used for world model pre-training and online data used for RL fine-tuning, which hurts performance.
\textbf{Middle:} Experience rehearsal mitigates the distributional shift issue.
\textbf{Right:} Quantitatively, at the early stage of fine-tuning, experience rehearsal reduces the Wasserstein distance between the online data and both the offline and expert data.
}
\vskip -0.11in
\label{fig:motivation}
\end{figure*}

\paragraph{Why Fine-Tuning a World Model Alone is Not Enough?}
While previous methods typically discard non-curated offline data during fine-tuning~\citep{wu2025ivideogpt,rajeswar2023mastering,wu2023pre}, we find that relying solely on a pre-trained world model often fails, particularly on hard-exploration tasks.
To understand why, we analyze the Shelf Place task from Meta-World~\citep{yu2020meta} as an illustrative task by visualizing the distributions of offline data $\mathcal{D}_{\text{off}}$ used for world model pre-training and online data $\mathcal{D}_{\text{on}}$ collected during early RL training in~\cref{fig:motivation}. 
The t-SNE plot in~\cref{fig:motivation} (left) reveals a distribution mismatch between $\mathcal{D}_{\text{off}}$ and $\mathcal{D}_{\text{on}}$, leading to three key issues:
\emph{(i)}~
The world model's accuracy can degrade if a significant distributional shift exists between the offline and online data. This degradation is particularly pronounced when the offline data distribution is narrow, which may create a substantial state-space gap.
\emph{(ii)}~For hard exploration tasks, the agent struggles to reach high-reward regions, causing the world model to be fine-tuned on a narrow online data distribution and leading to catastrophic forgetting.
\emph{(iii)}~The policy update in~\cref{eq:actor} relies on imagined trajectories $\tilde{\tau}=p_0(s)\prod_{t=0}^{H-1}\pi_\phi(a_t \mid s_t)p_\theta(s_{t+1} \mid s_t,a_t)$, where $p_0(s)$ is sampled from $\mathcal{D}_{\text{on}}$. 
A narrow $p_0(s)$ limits the world model to rollout promising trajectories for policy updates.
To address these challenges, we introduce two key components: \emph{i)} experience rehearsal, which mitigates distributional shift by retrieving task-relevant trajectories from non-curated datasets (\cref{fig:motivation} middle, right), and \emph{ii)} execution guidance, which encourages exploration by steering the agent toward regions where the world model has high confidence.\looseness-1

\paragraph{Experience Rehearsal} \label{sec:rehearsal}
Prior works like RLPD~\citep{ball2023efficient} and ExPLORe~\citep{li2023accelerating} have shown that replaying offline data can boost RL training. 
However, these methods use small, well-structured offline datasets. 
In our setting, directly replaying non-curated offline data is infeasible since our datasets are $\sim$100× larger and contain diverse tasks and embodiments.

We propose retrieving task-relevant trajectories $\mathcal{D}_{\text{retrieved}} = \{\tau^i_{\text{retrieved}}\}^N_{i=1}$ from the non-curated offline data based on neural feature distance between online samples and offline trajectories. 
This filters out irrelevant trajectories, creating a small task-relevant dataset. 
Specifically, we compute:
\begin{equation} \label{eq:retrieval}
    \mathbf{D} = \|\text{e}_\theta(o_{\text{on}}) - \text{e}_\theta(o_\text{off}) \|_2 ,
\end{equation}
where $\text{e}_\theta$ is the encoder learned during world model pre-training, and $o_{\text{on}}$ and $o_{\text{off}}$ are initial observations from trajectories in the online buffer and offline dataset, respectively. 
For efficient search to get the top-k similar trajectories, we pre-compute key-value pairs mapping trajectory IDs to neural features and use Faiss~\citep{douze2024faiss}, enabling retrieval in seconds. The retrieval precision can be found in~\cref{appendix:retrieval}.

The retrieved data is replayed during fine-tuning, so-called experience rehearsal. The retrieved data serves three purposes, as shown in~\cref{fig:overview}. 
First, it prevents catastrophic forgetting by continuing to train the world model on relevant pre-training data, particularly important for hard exploration tasks with narrow online data distributions. 
Second, it augments the initial state distribution $p_0(s)$ during model rollout, enabling the world model to rollout promising trajectories for policy learning. 
Third, as described below, it enables learning a policy prior for execution guidance. 
In \cref{app:proof_retrieval}, we explain that experience retrieval reduces distribution shift during online fine-tuning. 
We further demonstrate that experience retrieval acts as a regularizer, helping to prevent catastrophic forgetting during the fine-tuning process.
Unlike RLPD and ExPLORe, we do not use this data to learn a Q-function, eliminating the need for reward labels.

\paragraph{Execution Guidance via Prior Actors}
Standard RL training initializes the replay buffer with random actions and collects new data through environment interaction using the training policy. 
However, offline data often contains valuable information like near-expert trajectories and diverse state-action coverage that should be utilized during fine-tuning. 
Additionally, distribution shift between offline and online data can degrade pre-trained model weights, making it important to guide the online data collection toward the offline distribution at the early training stage.

To achieve this, we train a prior policy $\pi_{\text{bc}}$ via behavioral cloning on the retrieved offline data $\mathcal{D}_\text{retrieved}$. 
During online data collection, we alternate between this prior policy $\pi_{\text{bc}}$ and the RL policy $\pi_\phi$ according to a pre-defined schedule. 
Specifically, at the start of each episode, we probabilistically select whether to use $\pi_{\text{bc}}$. 
If $\pi_{\text{bc}}$ is selected, we randomly choose a starting timestep $t_{\text{bc}}$ and duration $H$ during which $\pi_{\text{bc}}$ is active, with $\pi_\phi$ used for the remaining timesteps. 
In \cref{app:proof_exec}, we theoretically show that, assuming $\pi_{\text{bc}}$ outperforms $\pi_\phi$ in the early stage of training, using a mixed policy composed of $\pi_{\text{bc}}$ and $\pi_\phi$ leads to improved policy performance.

While this approach shares similarities with JSRL~\citep{uchendu2023jump}, our method differs in three key aspects: 
\emph{i)} we leverage non-curated rather than task-specific offline data, 
\emph{ii)} we demonstrate the benefits of a model-based approach over JSRL's model-free framework, and \emph{iii)} we randomly switch between policies mid-episode rather than only using $\pi_{\text{bc}}$ at episode start. 
The complete algorithm and theoretical analysis can be found in~\cref{appendix:algo} and~\cref{appendix:theory}, respectively.

%%%%%%%%%%%%%%%%%%%%%%%%%%%%%%%%%%%%%%%%%%%%%%%%%%%%%%%%%%%%%%%%%%
%%%%%%%%%%%%%%%%%%%        Experiments          %%%%%%%%%%%%%%%%%%
%%%%%%%%%%%%%%%%%%%%%%%%%%%%%%%%%%%%%%%%%%%%%%%%%%%%%%%%%%%%%%%%%%

\section{Experiments} \label{section:experiments}

%%%%%%%%% Experiments Results %%%%%%%%%%
%%% Fine-tuning results %%%

\begin{figure*}[t] 
\centering
\includegraphics[width=0.31\textwidth]{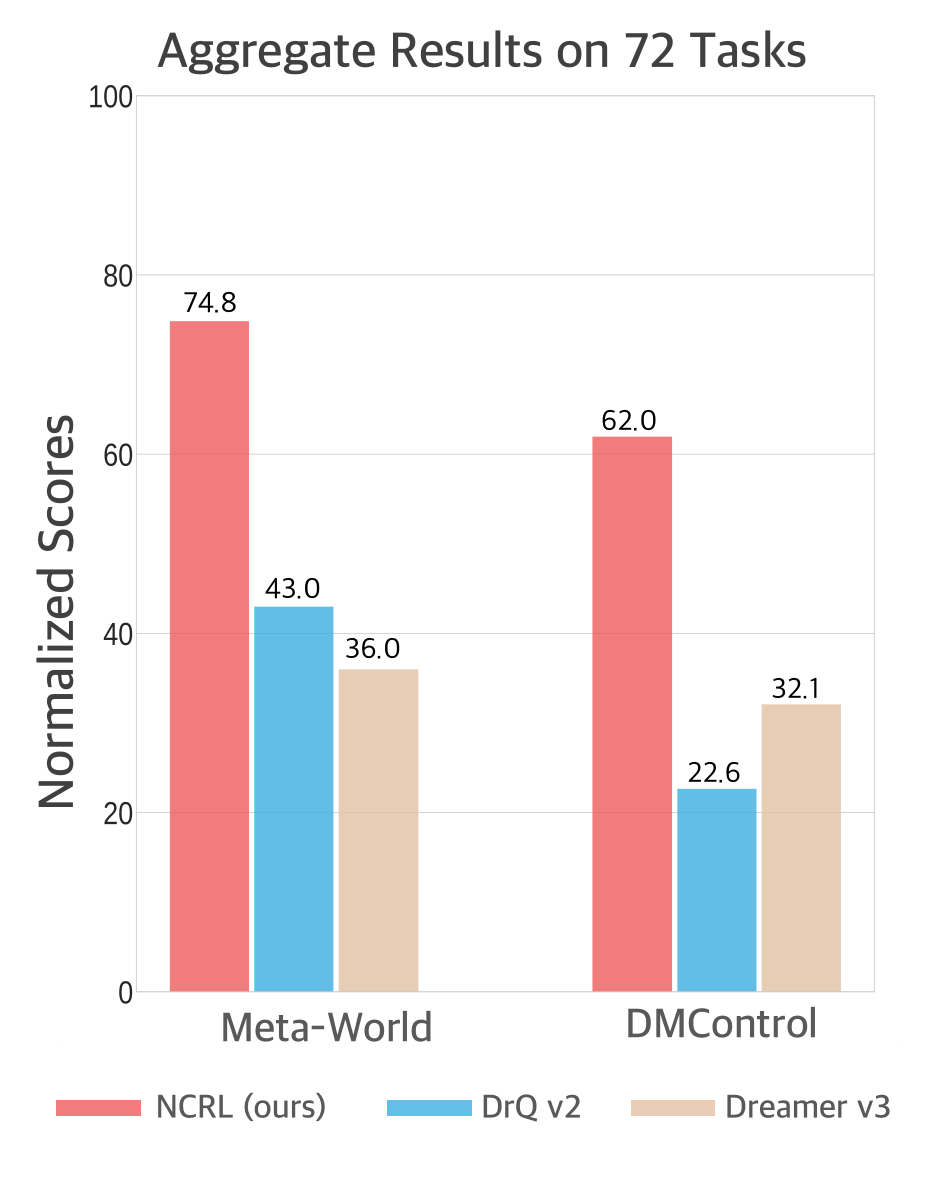}
\includegraphics[width=0.68\textwidth]{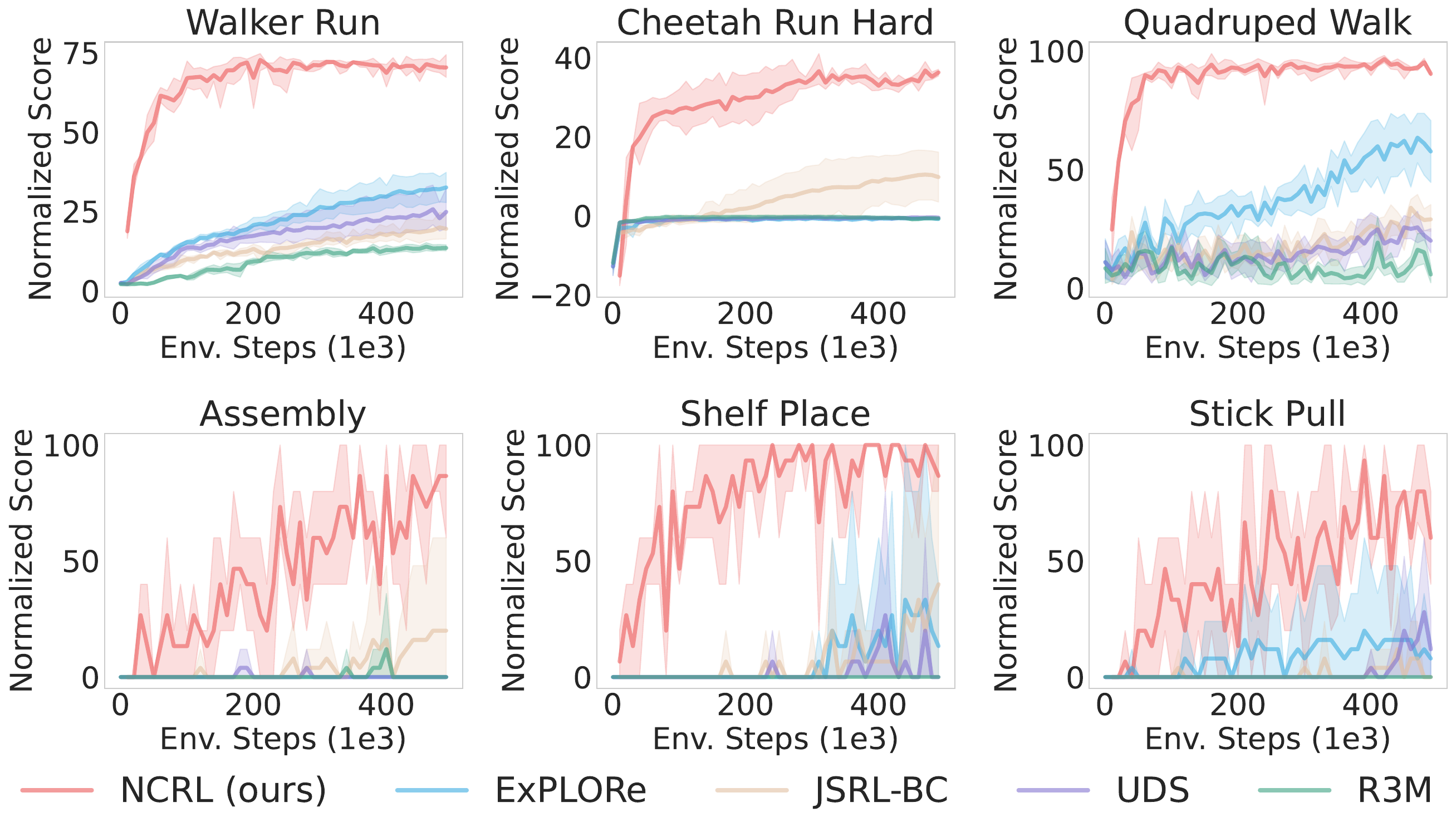}
\vskip -0.1in
\caption{
\textbf{Left:} Quantitative comparison across 72 diverse tasks from Meta-World~\citep{yu2020meta} and DMControl~\citep{tassa2018deepmind} with the same sample budget (150k). See~\cref{appendix:full_results} for full results.
\textbf{Right:} Learning curves on representative challenging locomotion and robotic manipulation tasks. NCRL consistently outperforms state-of-the-art methods that leverage offline data by a decent margin. 
We plot the mean and corresponding 95\% confidence interval.
}
\label{fig:main}
\vskip -0.1in
\end{figure*}
%%%%%%%%%%%%%%%%%%%%%%

In the experiments, we aim to answer the following questions:
\begin{itemize}
    \item[\textbf{Q1}] How does NCRL compare to state-of-the-art methods that leverage offline data and train-from-scratch baselines in terms of sample efficiency and final performance?
    \item[\textbf{Q2}] How does NCRL compare to other leading model-based approaches that utilize offline data?
    \item[\textbf{Q3}] How effectively does NCRL adapt to new tasks in a continual learning setting?
We further conduct detailed ablation studies to evaluate our method.
\end{itemize}

\paragraph{Tasks}
We evaluate our method on \emph{pixel}-based continuous control tasks from DMControl~\citep{tassa2018deepmind} and Meta-World~\citep{yu2020meta}.
The chosen tasks include both locomotion and manipulation tasks covering different challenges in RL, including high-dimensional observations, hard exploration, and complex dynamics. We use three random seeds for each task.

\paragraph{Dataset}
Our dataset consists of data from two benchmarks: DMControl and Meta-World, visualized in~\cref{appendix:task_visualization}. 
For DMControl, we include 10k trajectories covering 5 embodiments collected by \textit{unsupervised RL agents}~\citep{rajeswar2023mastering,pathak2017curiosity}, trained via curiosity without task-related information. 
These trajectories vary in competence and coverage. As the unsupervised RL agents are trained to maximize the agent's curiosity rather than a specific reward signal, the dataset for DMControl does not contain expert trajectories for a specific task (e.g., Walk, Run etc.)
For Meta-World, we collect mixed-quality 50k trajectories across 50 tasks using TDMPC-v2 agents~\citep{hansen2023td} by injecting Gaussian noise with $\sigma$ up to 2.0, which intentionally corrupts the policies and produces trajectories of varying success and quality. In practice, such a mixture of successful, partially successful, and failed behaviors can naturally arise from, for instance, noisy or partial human demonstrations collected through teleoperation.
In~\cref{fig:diffusion_policy}, we assess the dataset quality via imitation learning, showing unsatisfactory performance. 
This emphasizes the mixed-quality property of the dataset. 
When combined with the DMControl data, our complete offline dataset comprises 60k trajectories (10M state-action pairs) across 6 embodiments. 

\subsection{NCRL Improves Sample Efficiency Across Diverse Tasks}

\paragraph{Comparison with Methods that Leverage Offline Data} 
\label{experiment:offline}
We compare NCRL against several state-of-the-art methods that leverage reward-free data to improve RL training: 
\emph{(i)}~\textbf{R3M}~\citep{nair2022r3m}, a visual representation pre-training approach that serves as our baseline for comparing pre-trained visual features using non-curated offline data.
\emph{(ii)}~\textbf{UDS-RLPD}~\citep{yu2022leverage,ball2023efficient}, which assigns zero rewards to offline data and uses RLPD~\cite{ball2023efficient} for policy training. 
\emph{(iii)}~\textbf{ExPLORe}~\citep{li2023accelerating}, which labels offline data using UCB rewards. We enhance the original implementation with reward ensembles.
\emph{(iv)}~\textbf{JSRL-BC}~\citep{uchendu2023jump}, which collects online data using a mixture of the training policy and a behavior-cloned prior policy learned from offline data.
As the compared baselines cannot handle multi-embodiment data like NCRL, we preprocess the offline data to only include task-relevant trajectories for them. 
Despite the baselines having access to better-structured data, NCRL still significantly outperforms all baselines across the tested tasks. See \cref{appendix:baseline} for the details of baselines.\looseness-1

\cref{fig:main} (right) shows comparison results with baselines. 
Our method outperforms \emph{all} compared baselines by a large margin.
Compared to R3M, NCRL shows the importance of world model pre-training and reusing offline data during fine-tuning, versus representation learning alone. 
R3M fails to improve sample efficiency on most tasks, consistent with findings in~\citet{hansen2022pre}.

UDS and ExPLORe reuse offline data by labeling it with zero rewards and UCB rewards, respectively, and concatenating it with online data for off-policy updates. 
UDS shows only slightly better performance on Walker Run compared to R3M and JSRL-BC, demonstrating the ineffectiveness of zero-reward labeling. 
ExPLORe performs better on 2/3 locomotion tasks and shows progress on challenging manipulation tasks, but NCRL still significantly outperforms it, demonstrating the superiority of leveraging a pre-trained world model and properly reusing offline data during fine-tuning.

NCRL also clearly outperforms JSRL-BC. JSRL-BC's performance heavily depends on the offline data distribution. 
While JSRL-BC can perform well when a good prior actor can be extracted from offline data, it struggles with non-expert trajectories, showing only marginal improvements over other baselines on the Cheetah Run Hard, Assembly, and Shelf Place tasks. 
In contrast, NCRL effectively leverages non-expert offline data. 
For example, on Quadruped Walk, NCRL benefits from exploratory offline data, enabling pixel-based control within just 100 trials.

\paragraph{Comparison with Training-from-Scratch Methods}
We compare NCRL with two widely used training-from-scratch baselines: \textbf{DrQ-v2} and \textbf{DreamerV3}, representing model-free and model-based approaches, respectively.
~\cref{fig:main} (left) and~\cref{appendix:full_results} show comparison results on 22 locomotion and 50 robotic manipulation tasks with pixel inputs from DMControl and Meta-World benchmarks. 
With 150k online samples, NCRL achieves higher aggregate scores compared to DrQ-v2 and DreamerV3, matching their performance obtained with 3.3-6.7$\times$ more samples (500k for DMControl, 1M for Meta-World). 
Furthermore, NCRL achieves promising performance on hard exploration tasks where learning-from-scratch baselines fail, such as challenging Meta-World manipulation tasks and hard DMControl tasks.\looseness-1

\begin{figure*}[t]
    \centering
    \includegraphics[width=0.98\textwidth]{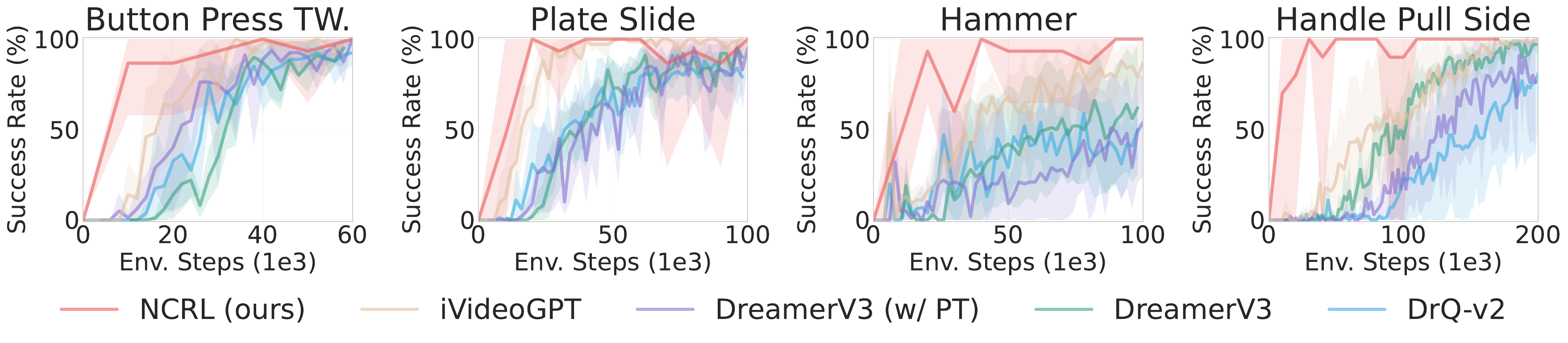}
    \vskip -0.11in
    \caption{Comparison with other world model pre-training methods.
    NCRL outperforms state-of-the-art model-based methods without relying on techniques used in iVideoGPT, such as reward shaping and demonstration-based replay buffer initialization.}
    \label{fig:compare_world_model}
    \vspace{-0.11in}
\end{figure*}

\paragraph{Comparison with Other Model-Based Methods}
While most multi-task/multi-embodiment world models focus on visual prediction~\citep{opensora,zhu2024sora} or imitation learning~\citep{zhu2024irasim,zhou2024robodreamer}, some works like~\citet{seo2022reinforcement},~\citet{wu2024pre}, and~iVideoGPT~\citep{wu2025ivideogpt} investigate world model pre-training with in-the-wild data for RL. 
These methods typically focus on designing novel or scalable model architectures to leverage the offline data, but lack mechanisms to better leverage offline data during RL fine-tuning.
Furthermore, due to the cost of RL training, these methods are usually evaluated on limited task sets, making the effectiveness of the pre-trained world model unclear on diverse tasks.

\Cref{fig:compare_world_model} compares our method with world model pre-training approaches. 
The baseline results are from the iVideoGPT paper to get the best reported results in the original paper. We further compare iVideoGPT in an aligned setting in~\cref{appendix:compare_ivideogpt_align}.
Despite extensive pre-training on diverse manipulation data, iVideoGPT and pre-trained DreamerV3 show only marginal improvements over training-from-scratch baselines. 
In contrast, NCRL clearly accelerates RL training by properly leveraging non-curated offline data during both pre-training and fine-tuning. 
Notably, baselines in~\cref{fig:compare_world_model} use reward shaping and expert replay buffer pre-filling, while NCRL uses \emph{none} of these tricks yet achieves superior performance. 
This highlights that \emph{(i)}~non-curated offline data contains useful information for RL fine-tuning, and \emph{ii)}~NCRL can effectively leverage such data. 
Furthermore, NCRL could potentially be combined with iVideoGPT to leverage even more diverse offline data in future work.

\subsection{NCRL Enables Fast Task Adaptation}
%%%%%% Task adaptation plot %%%%%%
\begin{figure*}[t] 
\label{fig:adaptation}
\centering
\includegraphics[width=0.98\textwidth]{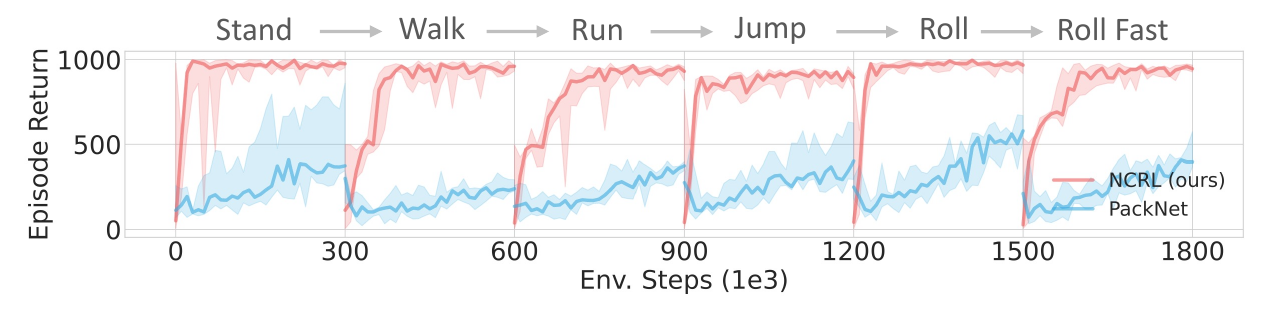}
\vskip -0.11in
\caption{NCRL enables fast task adaptation. We train an RL agent to control an Ant robot from DMControl to complete a series of tasks incrementally. 
NCRL significantly outperforms the widely used baseline PackNet by properly leveraging non-curated offline data.}
\label{fig:adaption}
\vspace{-0.11in}
\end{figure*}

We investigate NCRL's benefits for continual task adaptation, where an agent must incrementally solve a sequence of tasks. 
While similar to continual reinforcement learning (CRL) or life-long RL~\citep{parisi2019continual,khetarpal2022towards}, we use a simplified setting with a limited task set. 
Note that CRL has a broad scope; assumptions and experiment setups vary among methods, making it difficult to set up a fair comparison with other methods.
Rather than proposing a state-of-the-art CRL method, we aim to demonstrate that NCRL offers an effective approach to leverage previous data that also fits the CRL setting.

\paragraph{Setup \& Baselines}
We set our continual adaptation experiment based on the Quadruped robot from DMControl.
Specifically, the agent sequentially learns stand, walk, run, jump, roll, and roll fast tasks with 300K environment steps per task. 
To have a fair comparison, i.e., having comparable model parameters and eliminating the potential effects from pre-training on other tasks, we pre-train a small world model only on the Quadruped domain.
During training, the agent can access all previous experiences and model weights. 
We compare against a widely used baseline PackNet~\citep{mallya2018packnet}, which iteratively prunes actor parameters while preserving important weights to remember previous skills. 
For each new task, PackNet fine-tunes the actor model via iterative pruning while randomly reinitializing the critic model since rewards are not shared among tasks.

\paragraph{Results}
\Cref{fig:adaption} shows NCRL significantly outperforms PackNet, enabling adaptation within 100 trials per task. 
With limited samples, PackNet achieves only 20--60\% of NCRL's episodic returns. 
We attribute NCRL's superior performance to its ability to leverage the diverse offline data through both world model pre-training and fine-tuning with experience rehearsal and execution guidance.

\begin{figure*}
    \centering
    \includegraphics[width=0.99\linewidth]{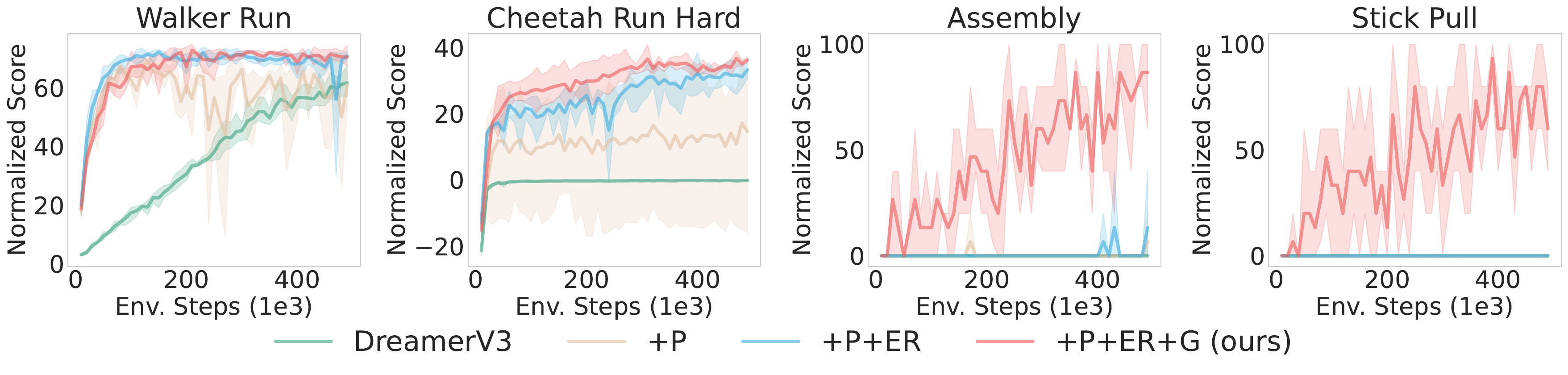}
    \vskip -0.1in
    \caption{Ablation study on key components. ``P'' represents world model pre-training, ``ER'' means experience rehearsal, and ``G'' represents execution guidance. 
    The combination of a pre-trained task-agnostic world model with retrieval-based experience rehearsal and execution guidance boosts RL performance across diverse tasks.}
    \label{fig:ablation}
    \vspace{-0.1in}
\end{figure*}

\subsection{Ablations} \label{experiment:abaltion}

\paragraph{Role of Each Component} We now analyze each component's contribution using the same set of tasks from~\cref{experiment:offline}.
As shown in~\cref{fig:ablation}, world model pre-training shows promising results when the offline data consists of diverse trajectories, such as data collected by exploratory agents (Walker Run), while it fails to work well when the offline data distribution is relatively narrow as in the Meta-World tasks.
We found that experience rehearsal and execution guidance stabilize training and improve performance on hard exploration tasks like Cheetah Run Hard and challenging manipulation tasks from Meta-World. 
This addresses {\em (i)}~world model pre-training alone, failing to fully leverage rich state and action information from the non-curated offline data and {\em (ii)}~distributional shift between offline and online data during fine-tuning hurts the learning. 
The proposed retrieval-based experience rehearsal and execution guidance help utilize offline data and accelerate exploration, which together enable NCRL to achieve strong performance on a wide range of tasks.

\begin{figure*}[t]
    \centering
    \includegraphics[width=0.75\textwidth]{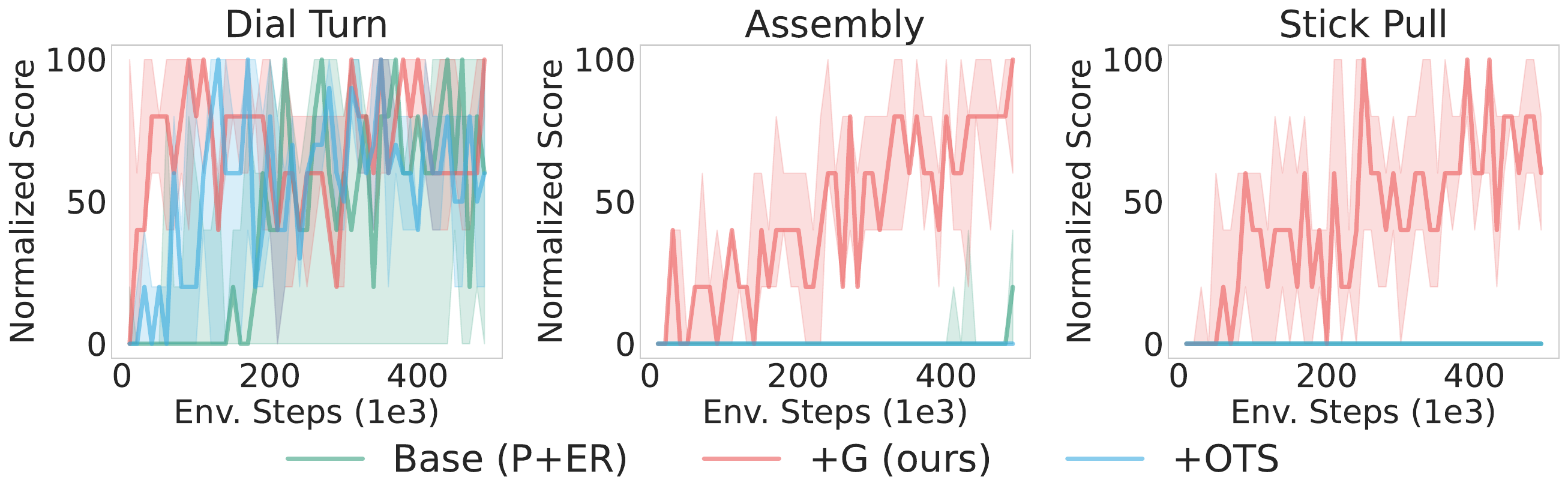}
    \vskip -0.1in
    \caption{Comparison of execution guidance versus uncertainty-based reward labeling. NCRL demonstrates the effectiveness of using execution guidance over uncertainty-based reward labeling on challenging robotic manipulation tasks.}
    \label{fig:ablation_ots}
    \vskip -0.1in
\end{figure*}
\begin{wrapfigure}{r}{0.35\textwidth}
    \vspace*{0.3em}
\includegraphics[width=0.3\textwidth]{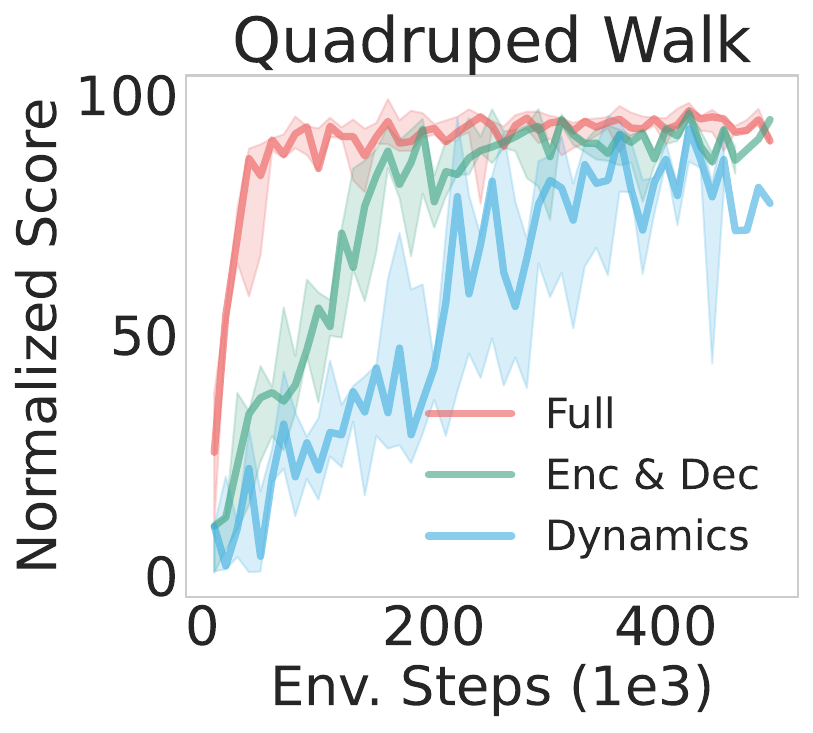}
\vskip -0.1in
\caption{Impact of fine-tuning different world model components.}
\vskip -0.1in
\label{fig:ablation_rssm}
\end{wrapfigure}
\paragraph{Comparison with Uncertainty-Aware Reward Function}
To leverage reward-free offline data, ExPLORe~\citep{li2023accelerating} proposes to label offline data with uncertainty-based rewards. 
To demonstrate the effectiveness of NCRL, we compare it with uncertainty-based rewards. 
Specifically, instead of using execution guidance, we use Optimistic Thompson Sampling (OTS)~\citep{hu2023optimistic} to label the imagined trajectories via model rollout. 
As shown in~\cref{fig:ablation_ots}, our method outperforms the variant using OTS on hard exploration tasks, Assembly and Stick Pull, by a large margin, showing the effectiveness of using execution guidance.

\paragraph{Comparison of Fine-Tuning Different Components}
We now investigate the role of different components in the world model during fine-tuning.
We use the Quadruped Walk task as a representative task for the investigation.
As shown in~\cref{fig:ablation_rssm}, the encoder, decoder, and latent dynamics play important roles during fine-tuning. 
Fine-tuning the full world model yields the best performance on the tested task.
The full world model is fine-tuned by default in our experiments.

%%%%%%%%%%%%%%%%%%%%%%%%%%%%%%%%%%%%%%%%%%%%%%%%%%%%%%%%%%%%%%%%%%
%%%%%%%%%%%%%%%%%%%        Conclusion           %%%%%%%%%%%%%%%%%%
%%%%%%%%%%%%%%%%%%%%%%%%%%%%%%%%%%%%%%%%%%%%%%%%%%%%%%%%%%%%%%%%%%
\section{Conclusion}
We propose NCRL, a simple yet efficient approach to leverage ample non-curated offline datasets consisting of reward-free, mixed-quality data collected across multiple embodiments.
NCRL pre-trains a task-agnostic world model on the non-curated data and adapts to downstream tasks via RL.
We show that naive fine-tuning of world models fails to accelerate RL training due to distributional shift and propose two techniques -- experience rehearsal and execution guidance -- to mitigate this issue.
Equipped with these techniques, we demonstrate that world models pre-trained on non-curated data are able to boost RL's sample efficiency across a broader range of locomotion and robotic manipulation tasks. 
We compared NCRL against a wide set of baselines, including two widely used training-from-scratch methods, five methods that utilize offline data, and one continual learning method. 
Our NCRL consistently delivers strong performance over these baselines. 
Extensive ablation studies reveal the effectiveness of the proposed techniques.
While promising, NCRL can be improved in multiple ways: extending to real-world applications, leveraging in-the-wild offline data, and exploring novel world model architectures.

\subsubsection*{Ethics Statement} \label{appendix:impact_statement}
This paper contributes to the field of reinforcement learning (RL), with potential applications including robotics and autonomous machines. 
While our methods hold promise for advancing technology, they could also be applied in ways that raise ethical concerns, such as in autonomous machines exploring the world and making decisions on their own.
However, the specific societal impacts of our work are broad and varied, and we believe a detailed discussion of potential negative uses is beyond the scope of this paper.
We encourage a broader dialogue on the ethical use of RL technology and its regulation to prevent misuse.

\subsubsection*{Reproducibility Statement}
To facilitate reproducibility, we provide implementation descriptions in~\cref{appendix:baseline}, report computational requirements in~\cref{appendix:compute_resources}, present the complete algorithm in~\cref{appendix:algo}, and specify key hyperparameters in~\cref{appendix:hyperparameters}. Code and datasets are in \href{https://github.com/zhaoyi11/ncrl}{https://github.com/zhaoyi11/ncrl}.

\subsubsection*{Acknowledgments}

We acknowledge CSC – IT Center for Science, Finland, for awarding this project access to the LUMI supercomputer, owned by the EuroHPC Joint Undertaking, hosted by CSC (Finland) and the LUMI consortium through CSC. We acknowledge the computational resources provided by the Aalto Science-IT project.
We acknowledge funding from the Research Council of Finland (353138, 362407, 352788, 357301, 339730).
Aidan Scannell and Wenshuai Zhao were supported by the Research Council of Finland, Flagship program Finnish Center for Artificial Intelligence (FCAI).

\bibliography{reference}
\bibliographystyle{iclr2026_conference}

\newpage
\appendix
\begin{center}
    {\Large \textbf{Appendices}}
\end{center}

\noindent\rule{\textwidth}{1pt}

\startcontents[appendices]
\printcontents[appendices]{l}{0}{\setcounter{tocdepth}{1}}

\noindent\rule{\textwidth}{1pt}

\newpage
\section{More Results}

\subsection{Comparison with Imitation Learning Baseline}
To demonstrate the mixed-quality property of the non-curated dataset, we compare NCRL with Diffusion Policy, a widely used imitation learning approach by modeling the agent with diffusion models.
From~\cref{fig:diffusion_policy}, we can see that due to the dataset consisting of non-expert data, the diffusion policy fails to demonstrate satisfactory results, while NCRL can effectively utilize the offline data.

\begin{figure}[ht] 
\centering
\includegraphics[width=0.8\textwidth]{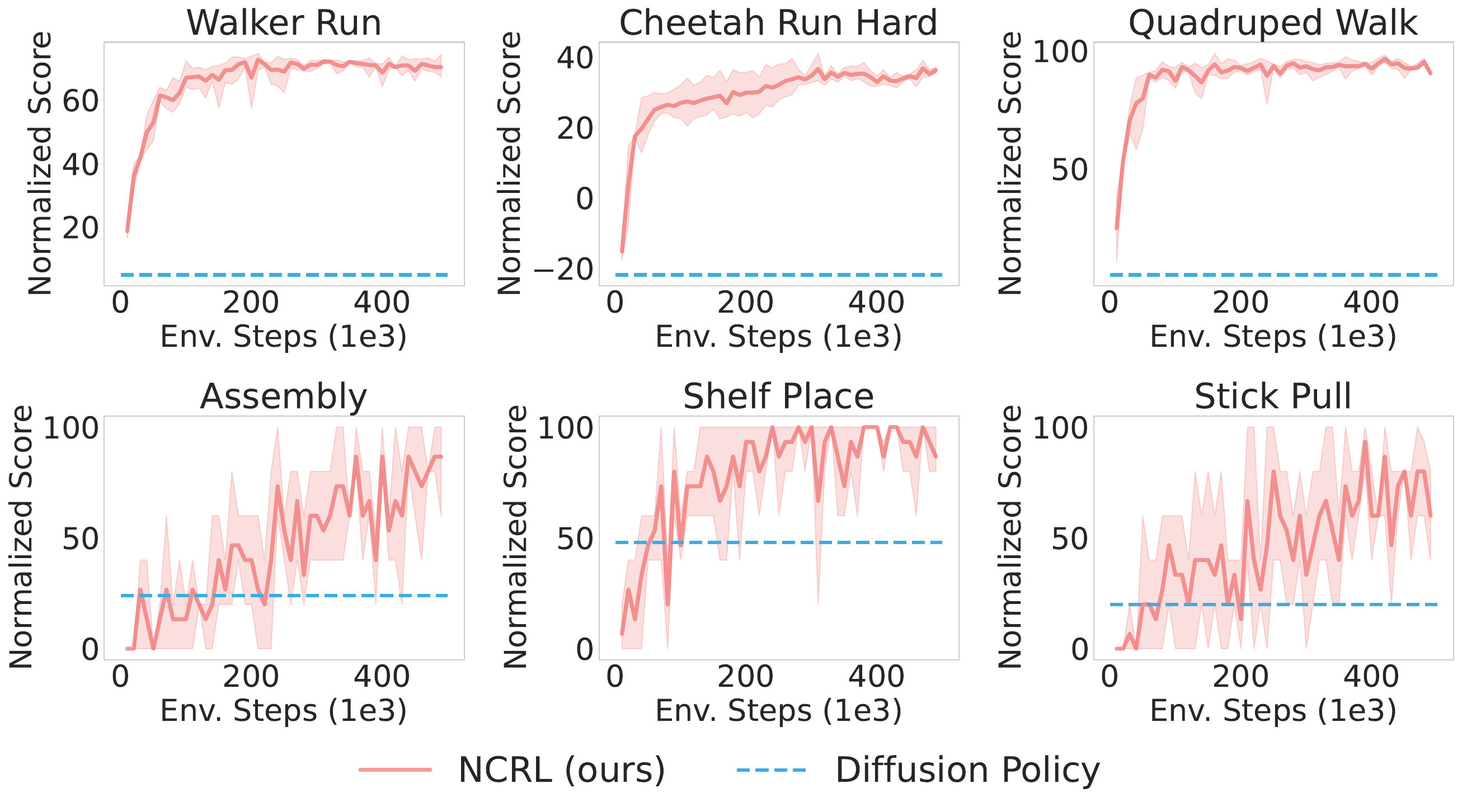}
\vskip -0.1in
\caption{Comparison with Diffusion Policy. NCRL can effectively handle non-curated offline data while the imitation learning baseline fails.}\label{fig:diffusion_policy}
\vskip -0.1in
\end{figure}

\subsection{Comparison with iVideoGPT}
\paragraph{Comparison in an Aligned Setting} 

\label{appendix:compare_ivideogpt_align}
\begin{wrapfigure}{r}{0.5\textwidth}
    \vspace*{-1em}
\includegraphics[width=0.45\textwidth]{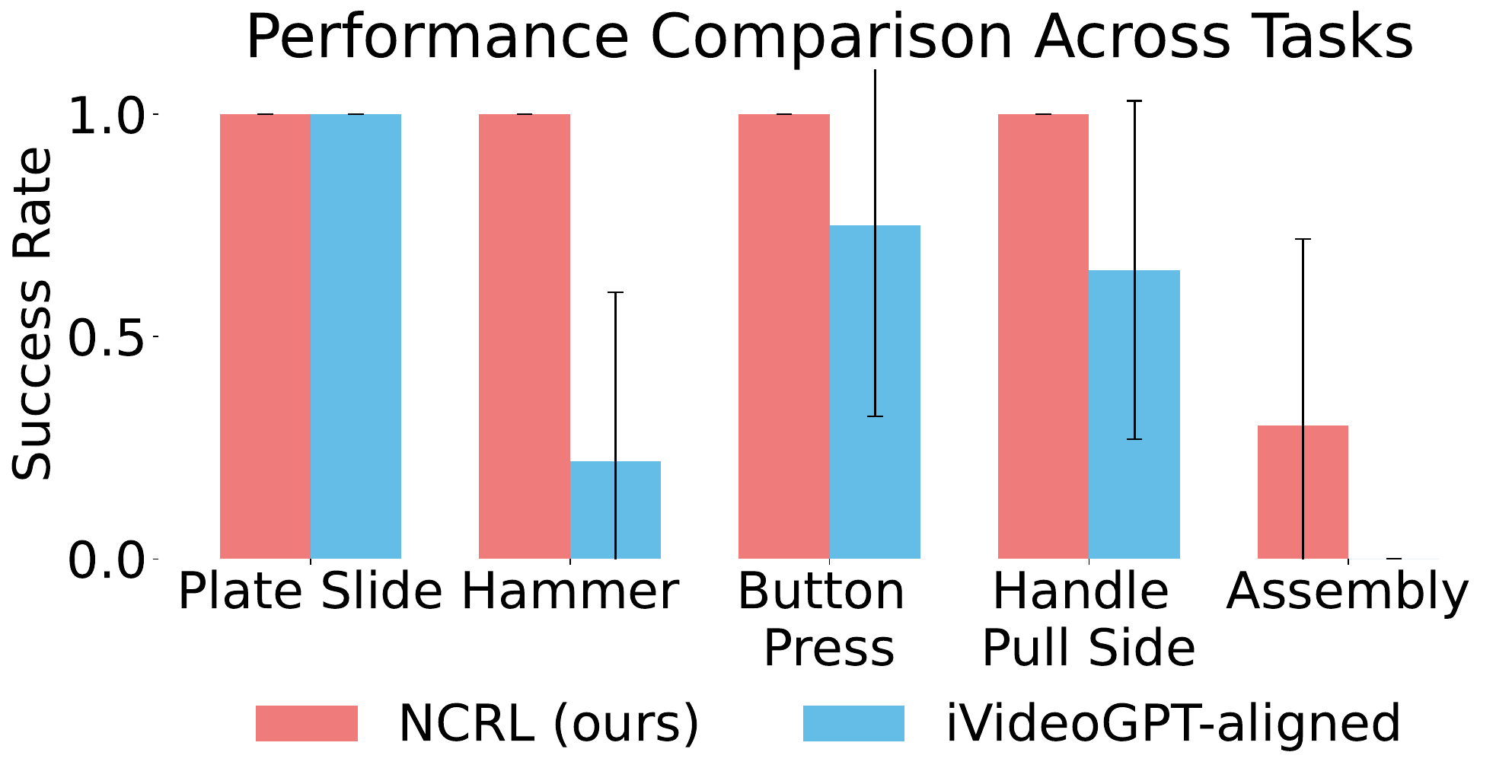}
\vskip -0.1in
\caption{Comparison with aligned iVideoGPT.}
\vskip -0.1in
\label{fig:ivideo-align}
\end{wrapfigure}

In~\cref{fig:compare_world_model}, we compare our method against the original iVideoGPT results~\citep{wu2025ivideogpt}.
Their experimental setups differ than ours in several ways: \emph{i)} iVideoGPT modifies the reward function to assign high rewards to successful episodes and \emph{ii)} pre-fills the replay buffer with a few demonstrations to ease exploration. 
In addition, iVideoGPT is pre-trained on X-embodiment datasets~\citep{o2023open}, whereas our method uses data from the same domain as the downstream tasks.

To control for these differences, we run an additional set of experiments. 
Specifically, we fine-tune iVideoGPT on our dataset, initialize the policy with behavior cloning, and remove both reward shaping and demonstration pre-filling. 
We refer to this variant as iVideoGPT-align.
We compare iVideoGPT-align with our NCRL after training with 200k environment steps~\cref{fig:ivideo-align}, NCRL still outperforms iVideoGPT-align with a decent margin.

\paragraph{Full Results of Comparison with iVideoGPT}
We compare with other model-based approaches on tasks used in iVideoGPT~\citep{wu2025ivideogpt}. We show that NCRL outperforms the baselines without using reward shaping and pre-filling the replay buffer with demonstrations. This highlights that although non-curated, the offline data can clearly boost RL training, and NCRL can effectively use the information in the data.

\begin{figure*}[ht] 
\label{fig:ablation_per_task}
\centering
\includegraphics[width=0.8\textwidth]{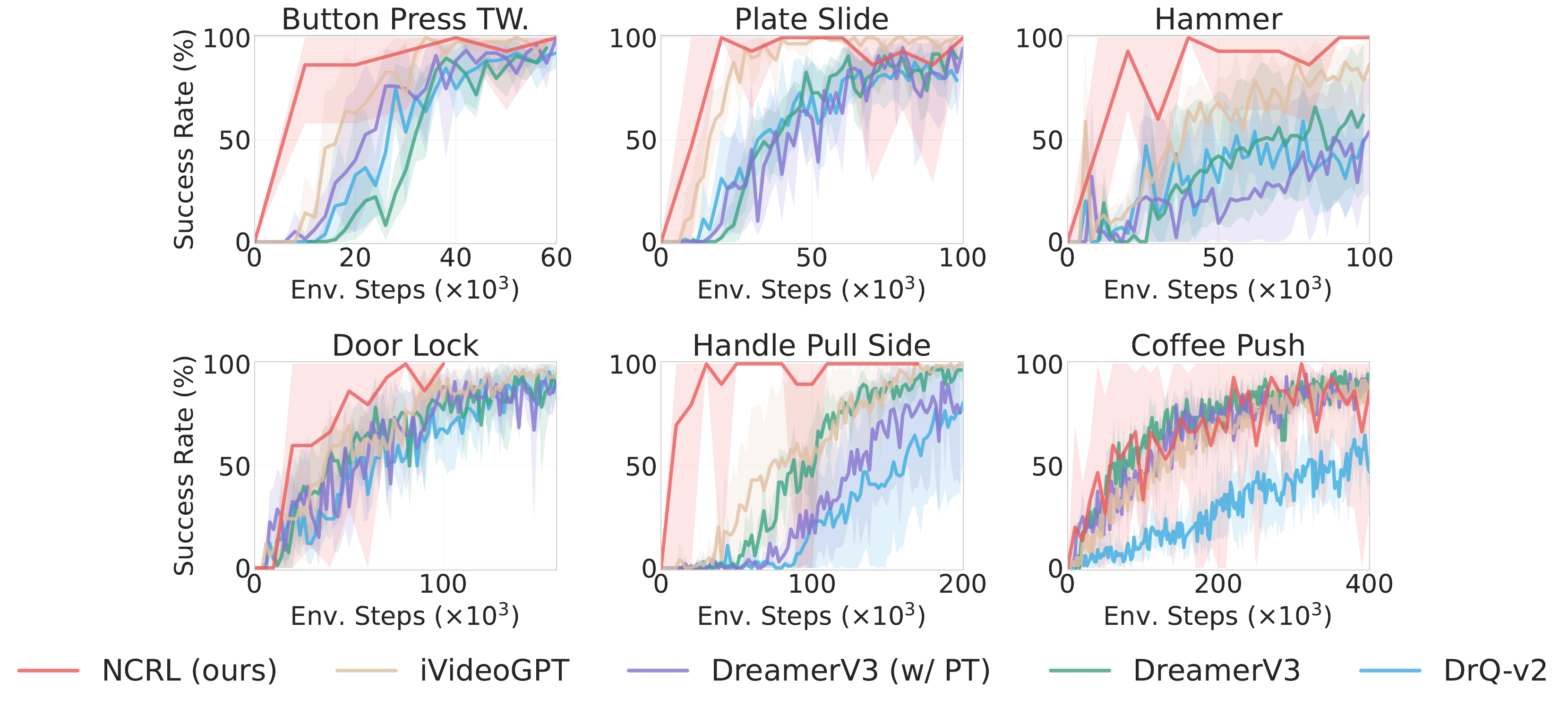}
\vskip -0.1in
\caption{Comparison with model-based approaches for leveraging offline data.}
\vskip -0.1in
\end{figure*}

\subsection{Experience Retrieval Performance}\label{appendix:retrieval}
In~\cref{sec:rehearsal}, we adopt a simple criterion to retrieve task-relevant trajectories from the non-curated dataset.
We evaluate retrieval performance in~\cref{tab:precision}, reporting precision at the top-250 and top-500 retrieved trajectories. 
Our method achieves consistently high precision. 
For the Door Open task, some retrieved trajectories overlap with related tasks (Door Close, Door Lock, Door Unlock), but we find that RL training remains effective across all 72 evaluated tasks. 
A likely reason is that most RL training data is collected online, with policy and value functions updated from imaginary data generated by model rollouts, which mitigates the impact of occasional task-irrelevant trajectories. 
We expect future work to explore more advanced retrieval strategies for improved robustness.

\begin{table}[h]
\vspace{-0.1in}
\centering
\caption{Precision results across tasks.}
\begin{tabular}{lcccc}
\toprule
\textbf{Tasks} & \textbf{Quadruped Run} & \textbf{Assembly} & \textbf{Shelf Place} & \textbf{Door Open} \\
\midrule
Precision@250 & 100\% & 100\% & 100\% & 84\% \\
Precision@500 & 100\% & 100\% & 100\% & 68\% \\
\bottomrule
\end{tabular}
\label{tab:precision}
\vspace{-0.1in}
\end{table}

\subsection{More Ablation Studies}
\paragraph{Hyperparameter Sensitivity}
In execution guidance, we randomly sample both the starting timestep $t_\text{start}$ and duration $H$. Unlike JSRL~\citep{uchendu2023jump}, our approach eliminates expensive tuning for these hyperparameters and demonstrates robust performance. In this stage, we only introduce one hyperparameter to probabilistically decide whether to use $\pi_{\text{BC}}$ based on a linear annealing schedule. We show that in~\cref{fig:execution_guidance_schedule}, our method is not sensitive to this annealing schedule, showing robustness in a wide range of possible schedules.

\begin{figure*}[h]
    \centering
    \vspace{-0.1in}
    \includegraphics[width=0.8\linewidth]{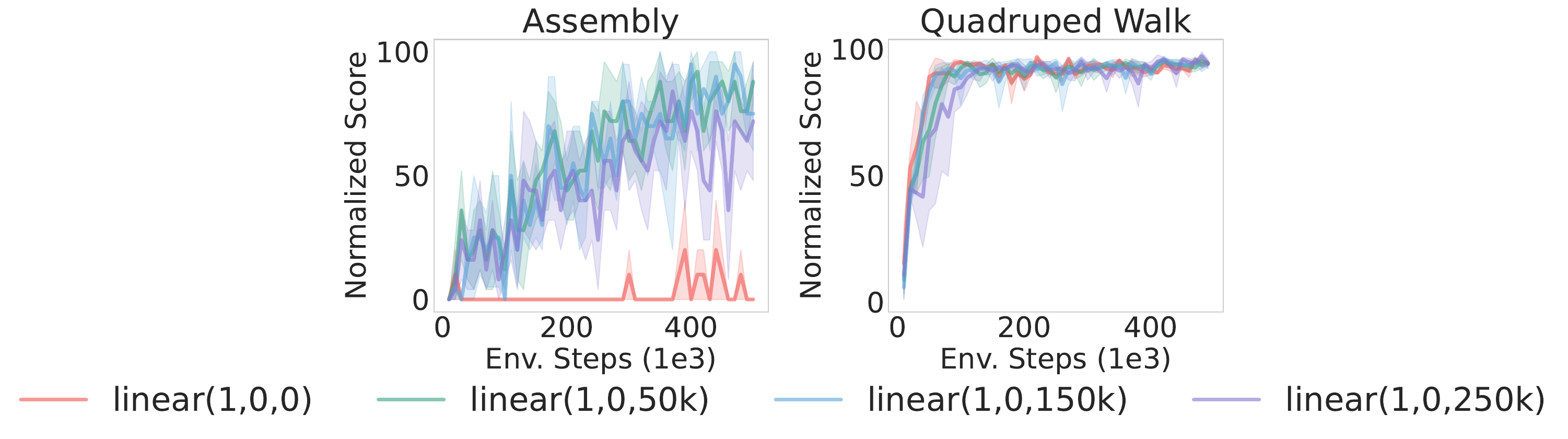}
    \vspace{-0.1in}
    \caption{Our method is less sensitive to the choice of the execution guidance annealing schedule.}
    \label{fig:execution_guidance_schedule}
    \vspace{-0.1in}
\end{figure*}

\paragraph{Role of Each Component} 
We show inter-quartile mean (IQM) and optimality gap for the ablation study of the role of each proposed component in~\cref{fig:ablation_appendix}. Together with the retrieval-based experience rehearsal and execution guidance, a pre-trained task-agnostic world model boosts RL performance on a wide range of tasks.

\begin{figure*}[ht]
  \centering
  \begin{subfigure}
    \centering
    \includegraphics[width=0.49\linewidth]{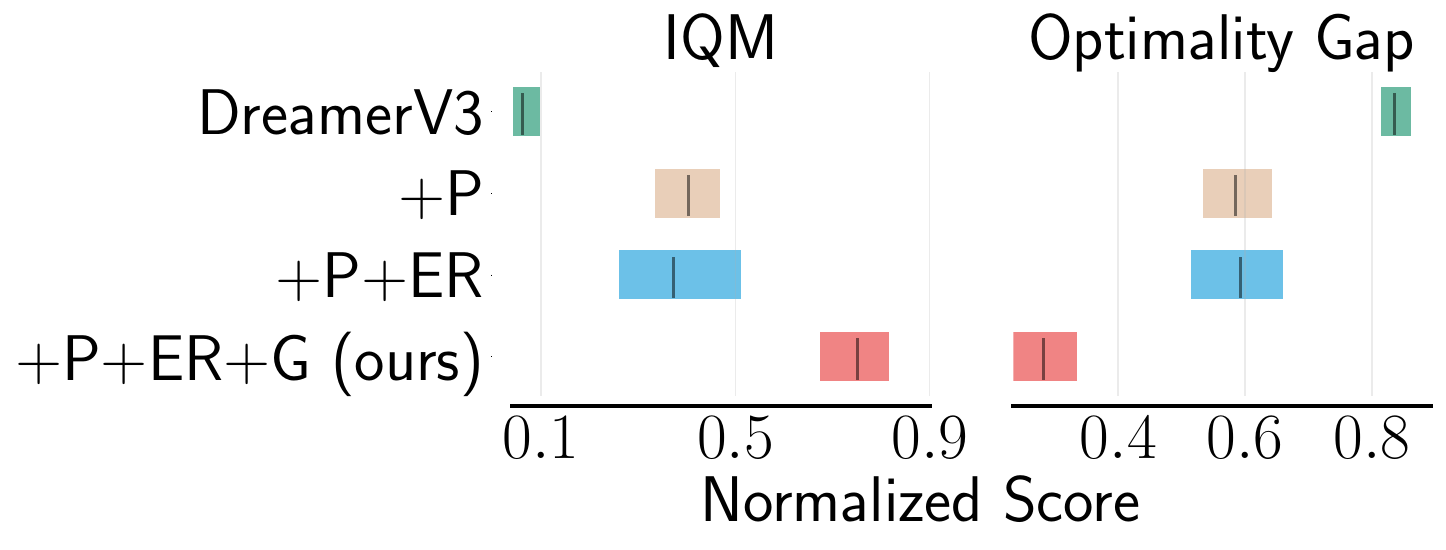}
    \label{fig:sub1}
  \end{subfigure}
  \hfill
  \begin{subfigure}
    \centering
    \includegraphics[width=0.45\linewidth]{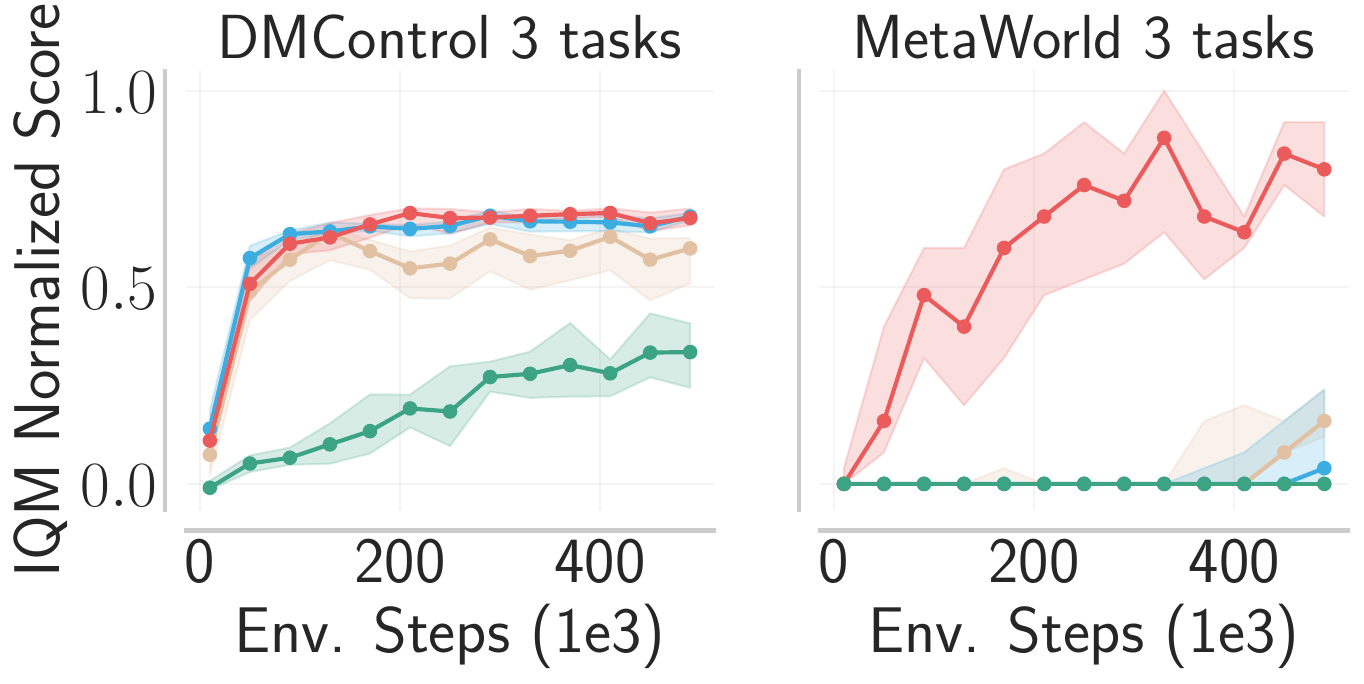}
    \label{fig:sub2}
  \end{subfigure}
  \vskip -0.15in
  \caption{Ablation study on the role of each component. ``P'' represents world model pretraining, ``ER'' means experience rehearsal, and ``G'' represents execution guidance.  Together with the proposed retrieval-based experience rehearsal and execution guidance, world model pre-training boosts RL performance on a wide range of tasks.}
  \label{fig:ablation_appendix}

\end{figure*}

\paragraph{Impact of Retrieved Data}
In~\cref{fig:ablation_retrieved_data}, we evaluate the impact of retrieved data on the agent’s performance to assess the robustness of NCRL with respect to the quality of the retrieved dataset. As shown in the experiments on three challenging MetaWorld manipulation tasks, we progressively replaced the retrieved task-relevant trajectories with 0\%, 25\%, 50\%, 75\%, and 100\% trajectories that lie far from the target task in the latent space. We observed that our method remains robust even as the quality of the retrieved data degrades.

\begin{figure*}[h]
    \centering
    \includegraphics[width=0.9\linewidth]{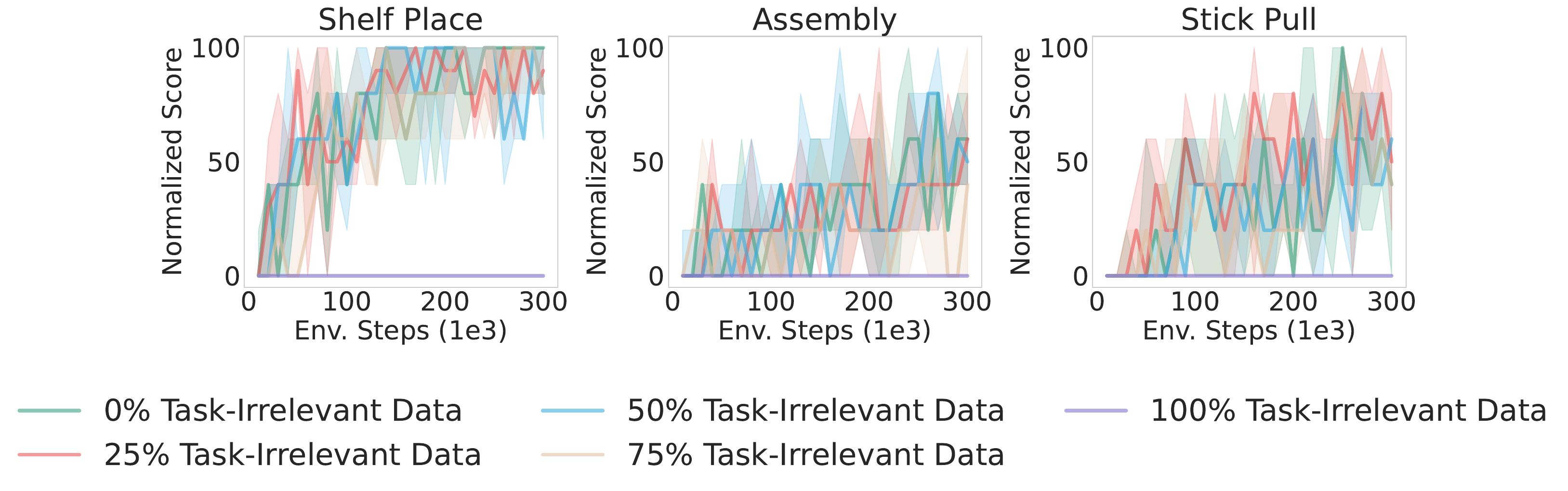}
    \vspace{-0.1in}
    \caption{Comparison with injecting different ratios of task-irrelevant offline data. Our method remains robust even as the quality of the retrieved data degrades.}
    \label{fig:ablation_retrieved_data}
    \vspace{-0.1in}
\end{figure*}

\subsection{Model Size of DreamerV3}
In~\cref{appendix:full_results}, we compare NCRL with the DreamerV3 baseline under a commonly used but relatively small model-size configuration. Although DreamerV3 has shown performance gains on more challenging domains such as Craft and DMLab when using larger models, these benefits are less pronounced in the settings examined in this work (DMControl and MetaWorld). Indeed, DreamerV3 itself uses a relatively small model for DMControl tasks~\cite{hafner2023mastering}.
To ensure a fair comparison, we additionally evaluated DreamerV3 using the same model size as NCRL. As shown in \cref{fig:dreamerv3_model_size}, increasing the model size improves DreamerV3’s performance on Walker Run but degrades performance on Quadruped Walk. 
However, NCRL consistently outperforms the DreamerV3 baseline across different model sizes.

\begin{figure*}[h]
    \centering
    \includegraphics[width=0.6\linewidth]{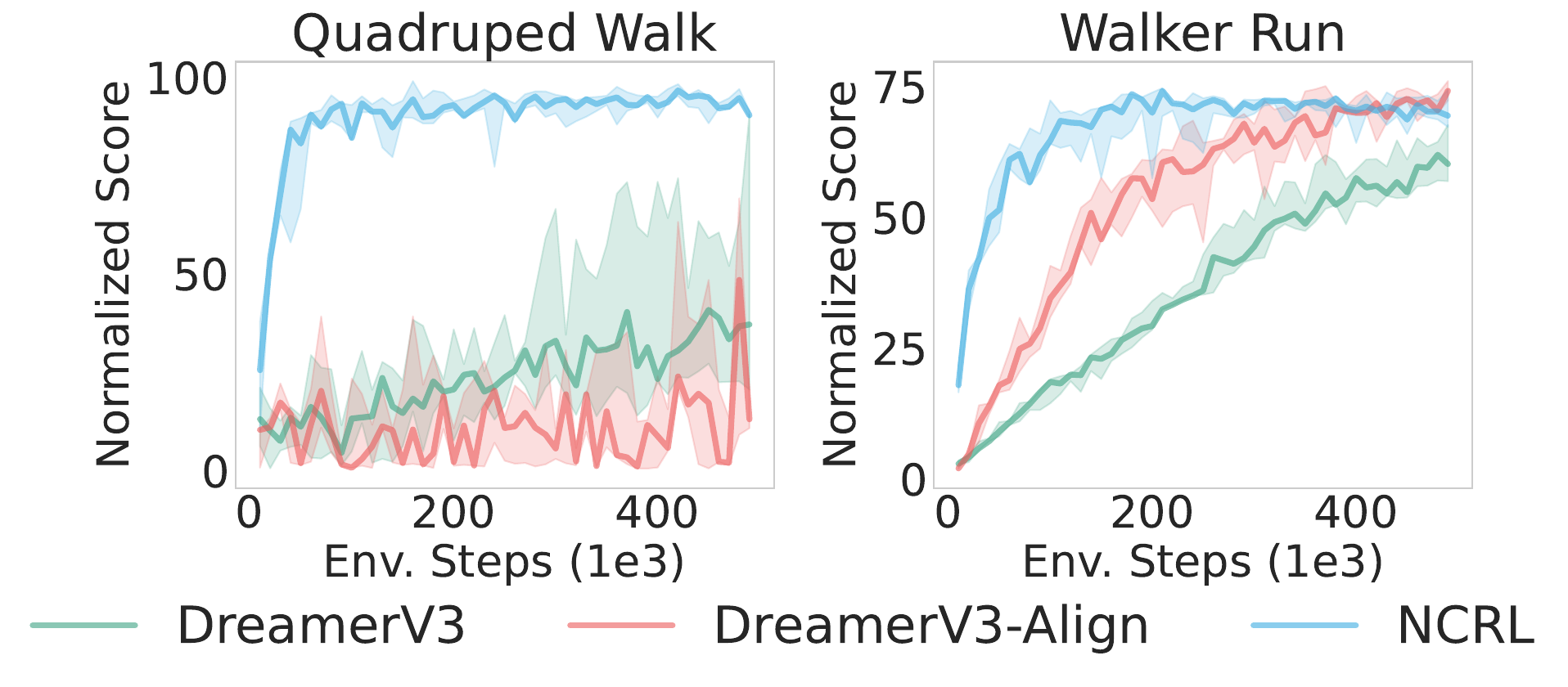}
    \vspace{-0.1in}
    \caption{Comparison of DreamerV3 under different model size configurations. NCRL consistently outperforms both variants.}
    \label{fig:dreamerv3_model_size}
    \vspace{-0.1in}
\end{figure*}

\subsection{Performance on Challenging MetaWorld Tasks}
In~\cref{appendix:full_results}, although NCRL solves most MetaWorld tasks with satisfactory performance, a few tasks still exhibit relatively low success rates with 150k environment steps. 
These tasks typically involve long horizons, small objects of interest, or strict success criteria. 
We have already shown three of these challenging tasks in~\cref{fig:main}, showing increased success rates with a larger training budget.
We now include additional experiments on other selected tasks using an increased training budget in~\cref{fig:metaworld_train_longer}. 
We found that, for most tasks, the success rate improves as the training budget increases.

\begin{figure*}[h]
    \centering
    \includegraphics[width=0.8\linewidth]{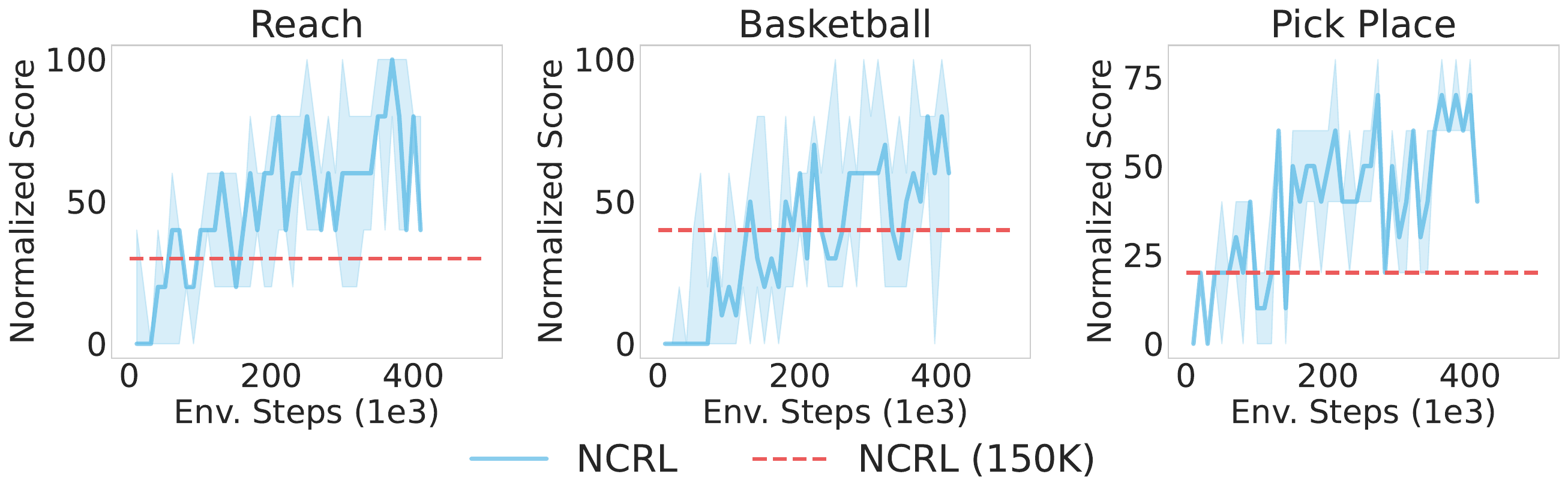}
    \vspace{-0.1in}
    \caption{Improved success rate on MetaWorld tasks as the training budget increases.}
    \label{fig:metaworld_train_longer}
    \vspace{-0.1in}
\end{figure*}

\section{Theoretical Analysis} \label{appendix:theory}
In this section, we give a theoretical analysis of the main conclusions in our paper. 

\subsection{Proof of the Benefits of Experience Retrieval} \label{app:proof_retrieval}
\begin{prop}
Experience retrieval reduces distribution shift during online fine-tuning, compared to using the full offline dataset directly, in the sense that
\begin{equation}
    \mathbb{E}_{s \sim p_{\text{retrieved}},\ s_{\text{on}} \sim p_{\text{on}}} \left[ ||s - s_\text{on}||_2 \right] < \mathbb{E}_{s \sim p_{\text{off}},\ s_{\text{on}} \sim p_{\text{on}}} \left[ ||s - s_\text{on}||_2 \right].
\end{equation}
\end{prop}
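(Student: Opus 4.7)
The plan is to formalize the retrieval procedure defined in~\cref{eq:retrieval} as a top-$k$ selection operator on $\mathcal{D}_{\text{off}}$, prove the inequality first in the neural feature space where it reduces to an elementary order-statistic argument, and then transfer the bound to the observation space via a regularity assumption on the encoder $e_\theta$. Concretely, writing $d_\theta(s,s') := \|e_\theta(s) - e_\theta(s')\|_2$, the support of $p_{\text{retrieved}}$ is by construction the union over online anchors $s_{\text{on}} \sim p_{\text{on}}$ of the $k$ elements of $\mathcal{D}_{\text{off}}$ minimizing $d_\theta(\cdot, s_{\text{on}})$.

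First I would establish the feature-space version of the claim,
\[
    \mathbb{E}_{p_{\text{retrieved}},\, p_{\text{on}}}\!\left[ d_\theta(s, s_{\text{on}}) \right] \;\leq\; \mathbb{E}_{p_{\text{off}},\, p_{\text{on}}}\!\left[ d_\theta(s, s_{\text{on}}) \right],
\]
which is essentially tautological: for any fixed $s_{\text{on}}$, averaging over the $k$ smallest order statistics of $\{d_\theta(s_i, s_{\text{on}})\}_{i=1}^{|\mathcal{D}_{\text{off}}|}$ is no larger than averaging over all of them, and the inequality is strict whenever at least one excluded sample has strictly larger distance than at least one included one (generic when $p_{\text{off}}$ is not a point mass at the nearest neighbors). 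Integrating over $s_{\text{on}} \sim p_{\text{on}}$ preserves the inequality by linearity of expectation.

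Second, I would transfer from feature space to observation space under the mild assumption that $e_\theta$ is bi-Lipschitz on the data manifold with constants $0 < c_1 \leq c_2$, i.e., $c_1 \|s - s'\|_2 \leq d_\theta(s,s') \leq c_2 \|s - s'\|_2$. Chaining the two Lipschitz bounds with the feature-space inequality yields
\[
    \mathbb{E}_{p_{\text{retrieved}},\, p_{\text{on}}}\!\left[\|s - s_{\text{on}}\|_2\right] \;\leq\; \tfrac{c_2}{c_1}\, \mathbb{E}_{p_{\text{off}},\, p_{\text{on}}}\!\left[\|s - s_{\text{on}}\|_2\right],
\]
and the strict inequality claimed in the proposition then follows provided the ``retrieval gap'' in feature space exceeds the Lipschitz distortion ratio, which is expected for a sufficiently discriminative encoder trained to separate trajectories from different embodiments and tasks.

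The hard part will be justifying the bi-Lipschitz condition, since a generic deep encoder need not preserve distances end-to-end. I see two clean escape routes: \emph{(i)}~weaken the proposition to live entirely in feature space, in which case it becomes a direct order-statistic inequality requiring no regularity on $e_\theta$ at all; or \emph{(ii)}~adopt the standard world-model assumption that $e_\theta$ is injective with a Lipschitz inverse on the training manifold, which is implicit in any reconstruction-based encoder whose decoder $d_\theta$ is continuous (as in the RSSM used here). Either path closes the argument cleanly, and I anticipate the authors' proof adopts one of them.
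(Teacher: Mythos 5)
Your proposal is correct in spirit but takes a genuinely different route from the paper, and the comparison is instructive. The paper's proof writes $p_{\text{off}}$ as the mixture $\alpha\, p_{\text{retrieved}} + (1-\alpha)\, p_{\text{rest}}$, assumes retrieval induces a clean threshold separation directly in observation space ($\|s_{\text{retrieved}} - s_{\text{on}}\| < \epsilon$ while $\|s_{\text{rest}} - s_{\text{on}}\| \geq \epsilon$), concludes $\mathbb{E}_{p_{\text{rest}}}[d] > \mathbb{E}_{p_{\text{retrieved}}}[d]$, and substitutes back into the mixture to get the strict inequality. Your order-statistic argument in feature space is essentially the same core observation (averaging over the $k$ nearest elements beats averaging over all of them) but is cleaner and needs no explicit $\epsilon$-threshold; your version also makes the genericity condition for strictness explicit, which the paper leaves implicit. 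The real divergence is that you correctly notice the retrieval rule operates on encoder features $\|e_\theta(o_{\text{on}}) - e_\theta(o_{\text{off}})\|_2$ while the proposition is stated in raw state distance, and you pay for that honesty with a bi-Lipschitz assumption; the paper simply elides this distinction and assumes the $\epsilon$-separation holds in observation space directly, which amounts to silently adopting your escape route \emph{(ii)}. One caution: your chained bound with ratio $c_2/c_1 \geq 1$ does not by itself deliver the claimed strict inequality -- as you note, you need the feature-space gap to dominate the distortion, so as literally written your observation-space conclusion is conditional where the paper's (under its stronger, unstated assumption) is not. Either of your two escape routes closes this; the feature-space-only restatement is arguably the more defensible proposition, and is also more faithful to what the algorithm actually computes.
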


\begin{proof}
Let \( p_{\text{off}}(s) \), \( p_{\text{on}}(s) \), and \( p_{\text{retrieved}}(s) \) denote the state distributions of the non-curated offline dataset \( \mathcal{D}_{\text{off}} \), the online dataset \( \mathcal{D}_{\text{on}} \), and the retrieved dataset \( \mathcal{D}_{\text{retrieved}} \subset \mathcal{D}_{\text{off}} \), respectively. We simplify the notation as
\[
\mathbb{E}_{s \sim p,\ s_{\text{on}} \sim p_{\text{on}}} \left[ || s- s_\text{on}||_2 \right]
\quad \text{as} \quad 
\mathbb{E}_{s \sim p} \left[ d(s, s_{\text{on}}) \right].
\]

Since \( \mathcal{D}_{\text{retrieved}} \subset \mathcal{D}_{\text{off}} \), the distribution \( p_{\text{off}}(s) \) can be expressed as a mixture distribution:
\[
p_{\text{off}}(s) = \alpha \cdot p_{\text{retrieved}}(s) + (1 - \alpha) \cdot p_{\text{rest}}(s),
\]
where \( p_{\text{rest}}(s) \) is the distribution over the remaining offline data, and \( \alpha = \frac{|\mathcal{D}_{\text{retrieved}}|}{|\mathcal{D}_{\text{off}}|} \) denotes the fraction of samples in the retrieved dataset.

The expected total variation for the mixture distribution decomposes as:
\begin{equation} \label{eq:mixed_expectation}
\mathbb{E}_{s \sim p_{\text{off}}} \left[ d(s, s_{\text{on}}) \right] 
= \alpha \cdot \mathbb{E}_{s \sim p_{\text{retrieved}}} \left[ d(s, s_{\text{on}}) \right]
+ (1 - \alpha) \cdot \mathbb{E}_{s \sim p_{\text{rest}}} \left[ d(s, s_{\text{on}}) \right].
\end{equation}

Assume that \( \mathcal{D}_{\text{retrieved}} \) is constructed by selecting states such that \( \|s_{\text{retrieved}} - s_{\text{on}}\| < \epsilon \), for some small \( \epsilon > 0 \). Consequently, states in \( \mathcal{D}_{\text{rest}} \) satisfy \( \|s_{\text{rest}} - s_{\text{on}}\| \geq \epsilon \). This construction implies the following bounds:
\begin{align}
\mathbb{E}_{s \sim p_{\text{retrieved}}} \left[ d(s, s_{\text{on}}) \right] &< \epsilon', \\
\mathbb{E}_{s \sim p_{\text{rest}}} \left[ d(s, s_{\text{on}}) \right] &\geq \epsilon',
\end{align}
for some \( \epsilon' > 0 \). Therefore, it follows that
\[
\mathbb{E}_{s \sim p_{\text{rest}}} \left[ d(s, s_{\text{on}}) \right] > \mathbb{E}_{s \sim p_{\text{retrieved}}} \left[ d(s, s_{\text{on}}) \right].
\]

Substituting into Equation~\eqref{eq:mixed_expectation} yields:
\begin{align*}
\mathbb{E}_{s \sim p_{\text{off}}} \left[ d(s, s_{\text{on}}) \right] 
&= \alpha \cdot \mathbb{E}_{s \sim p_{\text{retrieved}}} \left[ d(s, s_{\text{on}}) \right]
+ (1 - \alpha) \cdot \mathbb{E}_{s \sim p_{\text{rest}}} \left[ d(s, s_{\text{on}}) \right] \\
&> \alpha \cdot \mathbb{E}_{s \sim p_{\text{retrieved}}} \left[ d(s, s_{\text{on}}) \right]
+ (1 - \alpha) \cdot \mathbb{E}_{s \sim p_{\text{retrieved}}} \left[ d(s, s_{\text{on}}) \right] \\
&= \mathbb{E}_{s \sim p_{\text{retrieved}}} \left[ d(s, s_{\text{on}}) \right].
\end{align*}

Thus, the expected total variation between the retrieved data and online data is strictly smaller than that between the full offline data and online data.
\end{proof}

\textbf{Explanation.}
Experience retrieval helps prevent catastrophic forgetting during online fine-tuning.

\begin{definition}[Catastrophic Forgetting due to Data Distribution Shift]
Catastrophic forgetting occurs when a neural network, after training on a new data distribution, experiences a significant performance drop on previously learned tasks due to the overwriting of representations from earlier distributions, caused by biased parameter updates towards the new distribution.
\end{definition}

\begin{proof}
Following the previous notations, let \( \mathcal{D}_{\text{on}} \) and \( \mathcal{D}_{\text{retrieved}} \) denote the online dataset and the retrieved offline dataset, respectively. The objective in~\cref{eq:wm objective} can be written as:
\begin{align*}
\mathcal{L}_{\text{mixed}}(\theta) &= \mathcal{L}_{\text{on}}(\theta) + \lambda \cdot \mathcal{L}_{\text{retrieved}}(\theta) \\
&=    \mathbb{E}_{p_\theta, q_\theta,\ (o,a)\sim \mathcal{D}_{\text{on}}} \left[ \sum_{t=1}^T -\ln p_\theta(o_t \mid z_t, h_t)
     +  \beta \cdot \mathrm{KL} \left( q_\theta(z_t \mid h_t, o_t)\ \| \ p_\theta(z_t \mid h_t) \right) \right] \\
& +   \lambda \cdot \mathbb{E}_{p_\theta, q_\theta,\ (o,a)\sim \mathcal{D}_{\text{retrieved}}} \left[ \sum_{t=1}^T -\ln p_\theta(o_t \mid z_t, h_t)
     +  \beta \cdot \mathrm{KL} \left( q_\theta(z_t \mid h_t, o_t)\ \| \ p_\theta(z_t \mid h_t) \right) \right].
\end{align*}
Assuming the $\lambda$ is a monotonic function of $\alpha = \frac{|\mathcal{D}_\text{retrieved}|}{|\mathcal{D}_\text{off}|}$ and $\lambda > 0$,
since \( \mathcal{D}_{\text{retrieved}} \subset \mathcal{D}_{\text{off}} \), the term \( \mathcal{L}_{\text{retrieved}}(\theta) \) acts as a regularizer during online updates, constraining parameter changes on \( \mathcal{D}_{\text{on}} \) in a way that preserves performance on the retrieved offline distribution \( p_{\text{retrieved}} \). This mitigates the risk of catastrophic forgetting by anchoring the model to previously seen data.
\end{proof}

\subsection{Proof of Improved Performance with Execution Guidance} \label{app:proof_exec}

\begin{prop}[Performance Improvement via Execution Guidance]
Let $\pi^e$ denote an exploration policy and $\pi^g$ a guide policy obtained via imitation learning.
Let $\varepsilon = \max_{s}|\mathbb{E}_{a \sim \pi^g(\cdot|s)}[A_{\pi^e}(s, a)] |$.

Let $\tilde{\pi}$ be a mixed policy (execution guidance) derived from $\pi^e$ and $\pi^g$, defined as:
\begin{equation}
  \tilde{\pi}(a\vert s) = \alpha \pi^g(a\vert s) + (1-\alpha)\pi^e (a\vert s),\; \alpha\in[0,1].
\end{equation}

Then, the performance of the mixed policy $\tilde{\pi}$ exceeds that of the exploration policy $\pi^e$ by at least:
\begin{eqnarray}
\eta(\tilde{\pi}) - \eta(\pi^e) & \geq &
    \frac{\alpha }{1-\gamma }E_{s\sim d_{\pi }}\left[ \sum_{a}\pi^g(\cdot|s) A_{\pi }(s_{t},a)\right] \\
 &  & -2\alpha \varepsilon \left(\frac{1}{1-\gamma }-\frac{1}{1-\gamma (1-\alpha )} \right)\, . \nonumber 
\end{eqnarray}
where $\gamma \in [0,1)$ is the discount factor.
\end{prop}

\begin{proof}
The proof follows directly from Theorem 4.1 in~\cite{kakade2002approximately}: we just need to replace $\pi^{\prime}$ in~\cite{kakade2002approximately} with $\pi^g$ and $\pi$ in~\cite{kakade2002approximately} with $\pi^e$. According to \citep{kakade2002approximately} an example can be provided where this bound is tight.

This establishes that the performance improvement of the mixed policy $\tilde{\pi}$ over the exploration policy $\pi^e$ is positive when the expected policy improvement of the guidance policy over the execution policy $E_{s\sim d_{\pi^e}}\left[ \sum_{a}\pi^g(\cdot|s) A_{\pi^e}(s_{t},a)\right]$ is larger than the term $-2 (1 - \gamma) \varepsilon (\frac{1}{1-\gamma }-\frac{1}{1-\gamma (1-\alpha )})$ which results from the distribution shift due to using the guidance policy instead of only the execution policy.

Moreover, according to Corallary~4.2 in~\cite{kakade2002approximately}, when $E_{s\sim d_{\pi^e}}\left[ \sum_{a}\pi^g(\cdot|s) A_{\pi^e}(s_{t},a)\right] \geq 0$ and when the maximal immediate reward is positive, we can always choose $\alpha$ such that the performance improvement is positive (see~\cite{kakade2002approximately} for details).
\end{proof}

\clearpage

\section{More Related Work} \label{appendix:related_work}
In this section, we give a more detailed related work review.

\paragraph{RL with task-specific offline datasets}
Leveraging offline data is a promising direction to improve sample efficiency in RL.
One representative approach is offline RL, which trains agents using offline data without environment interaction.
These methods typically constrain the distance between learned and behavior policies in different ways~\citep{kumar2020conservative,fujimoto2021minimalist,kumar2019stabilizing,wu2019behavior,kostrikov2021offline,kostrikov2021boffline,uchendu2023jump}.
However, policy performance is highly dependent on dataset quality~\citep{yarats2022don}.
To enable continued improvement, offline-to-online RL methods~\citep{lee2022offline,zhao2022adaptive,yu2023actor,nair2020awac,rafailov2023moto} were developed, which fine-tune policies trained with offline RL by interacting with environments.
MOTO~\citep{rafailov2023moto} proposes a model-based offline-to-online RL method with reward-labeled data, and requires model-based
value expansion, policy regularization, and controlling epistemic uncertainty, while our method leverages reward-free and multi-embodiment data and requires none of the techniques proposed by MOTO.

Typical offline-to-online RL face training instability challenges~\citep{lee2022offline,lu2022challenges}. 
To mitigate this issue, RLPD~\citep{ball2023efficient} is proposed and demonstrates strong performance by simply concatenating offline and online data, but requires reward-labeled task-specific offline data and does not address multi-embodiment scenarios. 
ExPLORe~\citep{li2023accelerating} labels reward-free offline data using approximated upper confidence bounds (UCB) to solve hard exploration tasks, but relies on near-expert data for the target tasks, while we consider a more general setting with non-curated data.

\paragraph{RL with multi-task offline datasets}
Recent work has explored multi-task offline RL~\citep{kumar2022pre,hansen2023td,julian2020never,kalashnikov2021mt,yu2021conservative}, but requires known rewards.
PWM~\citep{georgiev2024pwm} and TDMPC-v2~\citep{hansen2023td} train world models for multi-task RL but are limited to state-based inputs and reward-labeled data.
To handle unknown rewards, approaches like human labeling~\citep{cabi2019scaling,singh2019end}, inverse RL~\citep{ng2000algorithms,abbeel2004apprenticeship}, or generative adversarial imitation learning~\citep{ho2016generative} can be used, though these require human labor or expert demonstrations.
\citet{yu2022leverage} assigns zero rewards to unlabeled data, which introduces additional bias.
Apart from these, there is a line of work that focuses on representation learning from in-the-wild data~\citep{schwarzer2021pretraining,parisi2022unsurprising,yang2021representation,yuan2022pre,stooke2021decoupling,shah2021rrl,wang2022vrl3,sun2023smart,ze2023visual,ghosh2023reinforcement} but fails to utilize rich information in the dataset, such as dynamics.

Recent studies~\citep{seo2022reinforcement,wu2025ivideogpt,wu2023pre} explore world model pre-training with action-free data, focusing on world model architecture design to utilize the action-free data. However, we demonstrate that naive fine-tuning of pre-trained world models fails on challenging tasks, while our method, incorporating experience rehearsal and execution guidance, significantly improves RL performance across 72 tasks.

\paragraph{Unsupervised RL}
In unsupervised RL, an agent explores the environment based on intrinsic motivations, and the models' parameters are initialized during this self-motivated exploration stage, aiming for fast downstream task learning~\citep{rajeswar2023mastering,pathak2017curiosity,burda2018exploration,eysenbach2018diversity,pathak2019self,liu2020unsupervised,sekar2020planning,liu2021aps,yarats2021reinforcement,laskin2021urlb,mazzaglia2022choreographer,xu2022learning}. 
Our problem setting differs from unsupervised RL in several ways: i) Unsupervised RL interacts with the environment actively while we leverage \emph{static} offline datasets, ii) unsupervised RL gives a specific focus on designing different intrinsic rewards, while our setting focuses on improving sample efficiency by leveraging unlabeled datasets.

\paragraph{Generalist Agents}
RL methods usually perform well on a single task~\citep{vinyals2019grandmaster,andrychowicz2020learning}, however, this contrasts with humans who can perform multiple tasks well. 
Recent works have proposed generalist agents that master a diverse set of tasks with a single agent~\citep{reed2022generalist,brohan2022rt,team2024octo,zhao2024rp1m}. 
These methods typically resort to scalable models and large datasets and are trained via imitation learning~\citep{brohan2022rt,brohan2023rt,o2023open,khazatsky2024droid}. 
In contrast, we train a task-agonistic world model and use it to boost RL performance for multiple tasks and embodiments.

\paragraph{World models}
World models learn to predict future observations or states based on historical information. World models have been widely investigated in online model-based RL~\citep{hafner2019dream,haRecurrentWorldModels2018,micheli2022transformers,alonso2024diffusionworldmodelingvisual,scannell2025Discrete}.
Recently, the community has started investigating scaling world models~\citep{haRecurrentWorldModels2018}, for example, \citet{hu2023gaia,pearce2024scaling,wu2025ivideogpt,agarwal2025cosmos,MereuGenerative2025} train world models with Diffusion Models or Transformers. 
However, these models are usually trained on demonstration data. 
In contrast, we explore the offline-to-online RL setting -- closely fitting the pre-train and then fine-tune paradigm -- and we focus on leveraging reward-free and multi-embodiment data to increase the amount of available data for pre-training. We further identify the distributional shift issue when fine-tuning the pre-trained world model and mitigate the issue by proposing experience rehearsal and execution guidance.

\section{Limitations} \label{appendix:limitations}

Although demonstrating strong performance on a diverse set of tasks, our method has the following limitations. 
\emph{1)} The world model architecture used in our paper is the recurrent state space model. 
This model is built upon RNN, which can be limited for scaling. 
This can be mitigated by using a Transformer and a diffusion-based world model. 
However, we note that the main conclusion of this paper should still be valid. 
\emph{2)} We do not thoroughly discuss the generalization ability of the pre-trained world model. 
With DMControl tasks, our method shows a promising trend in generalizing to unseen tasks. 
However, generalization to new embodiments or novel configurations is still challenging, which requires even diverse training data. 
\emph{3)} The non-curated offline data used in our paper, although lifting several key assumptions in previous offline-to-online RL, is still in-domain data, i.e., our current method is not able to leverage the vast in-the-wild data. 
A promising direction is to combine in-the-wild data for pre-training as in~\citep{wu2025ivideogpt} and the domain-specific ``in-house'' data (as used in our paper) for post-training. 
\emph{4)} We only conduct experiments in the simulator. Considering the sample efficiency of our proposed method, it could be promising to conduct experiments on real-world applications.

\section{Disclosure of LLMs Usage}
Large Language Models (LLMs) were used to assist word choice, improving grammar as well as proof checking in~\cref{appendix:theory}. LLMs were also used in compressing the Related Work section due to page limits. The Related Work section was initially written by authors without using LLMs and the compressed text was subsequently revised by the authors. The main draft was written by authors without using LLMs. The ideas were formalized independently of LLMs assistance. 

\section{Compute Resources} \label{appendix:compute_resources}
We conduct all experiments on clusters equipped with AMD MI250X GPUs, 64-core AMD EPYC ``Trento" CPUs, and 64 GBs DDR4 memory. For pre-training, it takes $\sim 48$ GPU hours for 150k steps. For fine-tuning, it tasks $\sim 8$ GPU hours per run for 150K environment steps. Note that due to AMD GPUs not supporting hardware rendering, the training time should be longer than using Nvidia GPUs. To reproduce the NCRL's results in \cref{fig:main}, it roughly takes 8 h * 72 tasks * 3 seeds = 1728 GPU hours.

\section{Implementation Details} \label{appendix:baseline}
\subsection{Behavior Cloning}
The Behavior Cloning methods used in both the execution guidance of NCRL and JSRL-BC are the same. We use a four-layer convolutional neural network~\citep{lecun1995convolutional} with kernel depth [32, 64, 128, 256] following a three-layer MLPs with  LayerNorm~\citep{ba2016layer} after all linear layers.  

We list the adopted encoder and actor architectures for reference. 

~
\begin{lstlisting}[language=Python]
class Encoder(nn.Module):
  def __init__(self, obs_shape):
    super().__init__()
    assert obs_shape == (9, 64, 64), f'obs_shape is {(obs_shape)}, but expect (9, 64, 64)' # inputs shape

    self.repr_dim = (32 * 8) * 2 * 2
    _input_channel = 9
        
    self.convnet = nn.Sequential(
      nn.Conv2d(_input_channel, 32, 4, stride=2), # [B, 32, 31, 31]
        nn.ELU(),
        nn.Conv2d(32, 32*2, 4, stride=2), #[B, 64, 14, 14]
        nn.ELU(),
        nn.Conv2d(32*2, 32*4, 4, stride=2), #[B, 128, 6, 6]
        nn.ELU(),
        nn.Conv2d(32*4, 32*8, 4, stride=2), #[B, 256, 2, 2]
        nn.ELU())
      self.apply(utils.weight_init)

    def forward(self, obs):
      B, C, H, W = obs.shape
    
      obs = obs / 255.0 - 0.5
      h = self.convnet(obs)
      # reshape to [B, -1]
      h = h.view(B, -1)
      return h
\end{lstlisting}

\begin{lstlisting}[language=Python]
class Actor(nn.Module):
  def __init__(self, repr_dim, action_shape, feature_dim=50, hidden_dim=1024):
    super().__init__()
    self.trunk = nn.Sequential(nn.Linear(repr_dim, feature_dim),
                               nn.LayerNorm(feature_dim), nn.Tanh())

    self.policy = nn.Sequential(nn.Linear(feature_dim, hidden_dim),
                                nn.LayerNorm(hidden_dim), nn.ELU(),
                                nn.Linear(hidden_dim, hidden_dim),
                                nn.LayerNorm(hidden_dim), nn.ELU(),
                                nn.Linear(hidden_dim, action_shape[0]))

    self.apply(utils.weight_init)

  def forward(self, obs, std):
    h = self.trunk(obs)
    return self.policy(h)
\end{lstlisting}

\subsection{JSRL+BC}
Jump-start RL~\citep{uchendu2023jump} is proposed as an offline-to-online RL method. It includes two policies, a prior policy $\pi_{\theta_1}(a|s)$ and a behavior policy $\pi_{\theta_2}(a|s)$, where the prior policy is trained via offline RL methods and the behavior policy is updated during the online learning stage. However, offline RL requires the offline dataset to include rewards for the target task. To extract behavior policy from the offline dataset, we use the BC agent described above as the prior policy. During online training, in each episode, we randomly sample the rollout horizon $h$ of the prior policy from a pre-defined array \verb|np.arange(0, 101, 10)|. We then execute the prior policy for $h$ steps and switch to the behavior policy until the end of an episode. 

\subsection{ExPLORe}
For the ExPLORe baseline, we follow the original training code\footnote{Source code of ExPLORe \href{https://github.com/facebookresearch/ExPLORe}{https://github.com/facebookresearch/ExPLORe}}. We sweep over several design choices: i) kernel size of the linear layer used in the RND and reward models: [256 (default), 512]; ii) initial temperature value: [0.1 (default), 1.0]; iii) whether to use LayerNorm Layer (no by default); iv) learning rate: [1e-4, 3e-4 (default)]. However, we fail to obtain satisfactory performance. There are several potential reasons: i) the parameters used in the ExPLORe paper are tuned specifically to their setting, where manipulation tasks and near-expert trajectories are used; ii) the coefficient term of the RND value needs to be tuned carefully for different tasks and the reward should also be properly normalized.

To achieve reasonable performance and eliminate the performance gap caused by implementation-level details, we make the following modifications: i) we replace the RND module with ensembles to calculate uncertainty; ii) the reward function shares the latent space with the actor and critic. 

\clearpage
\section{Algorithm} \label{appendix:algo}
The full algorithm is described in \cref{alg:ncrl}.
\begin{algorithm}[H]
\caption{Efficient RL by Guiding World Models with Non-Curated Offline Data}\label{alg:ncrl}
\begin{algorithmic}
\Require Non-curated offline data $\mathcal{D}_{\text{off}}$, Online data $\mathcal{D}_{\text{on}} \gets \emptyset$, Retrieval data $\mathcal{D}_{\text{retrieval}} \gets \emptyset$

~~~~~World model $f_\theta, q_\theta, p_\theta, d_\theta$

~~~~~Policy $\pi_{\phi_{\text{RL}}}$, $\pi_{\phi_\text{BC}}$, Value function $v_\phi$ and Reward $r_\xi$.
% \REQUIRE Pre-train steps K, Fine-tune episodes N, gradient step per episode M.
\State
\State {\color{vibrantblue} \emph{// Task-Agnostic World Model Pre-Training}}
\For{num. pre-train steps} 
    \State Randomly sample mini-batch $\mathcal{B_{\text{off}}}: \{ o_t, a_t, o_{t+1}\}_{t=0}^T$ from $\mathcal{D}_{\text{off}}$.
    \State Update world model $f_\theta, q_\theta, p_\theta, d_\theta$ by minimizing \cref{eq:wm objective} on sampled batch $\mathcal{B}$.
\EndFor

\State \State {\color{vibrantblue} \emph{// Task-Specific Training}}
\State {\color{vibrantorange} \emph{// Experience Retrieval}}
\State Collect one initial observation $o^0_{\text{on}}$ from the environment.
\State Compute the visual similarity between $o_{\text{on}}$ and initial observations of trajectories $o_{\text{off}}$ in $\mathcal{D}_{\text{off}}$ using~\cref{eq:retrieval}.
\State Select R trajectories according to~\cref{eq:retrieval} and fill $\mathcal{D}_{\text{retrieval}}$.
\State \State {\color{vibrantorange} \emph{// Behavior Cloning Policy Training}}
\For{num. bc updates}
    \State Randomly sample mini-batch $\mathcal{B_{\text{retrieval}}}: \{o_t, a_t\}_{t=0}^N$ from $\mathcal{D}_{\text{retrieval}}$.
    \State Update $\pi_{\phi_\text{BC}}$ by minimizing $-\frac{1}{N} \sum_{t=0}^N \log \pi_{\phi_\text{BC}}(a_t|o_t)$.
\EndFor
\State \State {\color{vibrantorange} \emph{// Task-Specific RL Fine-Tuning}}

\For{num. episodes} 
    \State {\color{vibrantpurple} \emph{// Collect Data}}
    \State Decide whether to use $\pi_{\phi_{\text{BC}}}$ according to the predefined schedule.
    \If {Select $\pi_{\phi_{\text{BC}}}$}
        \State Randomly select the starting time step $k$ and the rollout horizon $H$.
    \EndIf
    \State 
    $t \leftarrow 0$
    \While{ $t \leq$ episode length}
        \State $a_t = \pi_{\phi_{BC}} (a_t|o_t)$ if {Use $\pi_{\phi_{\text{BC}}}$ and $ k \leq t \leq H$} else $a_t = \pi_{\phi_{RL}} (a_t|o_t)$.
        \State Interact with the environment using $a_t$. Store $\{o_t, a_t, r_t, o_{t+1}\}$ to $\mathcal{D}_{\text{on}}$.
        \State $t \leftarrow t+1$
    \EndWhile
    
    \State \State {\color{vibrantpurple} \emph{// Update Models}}
    \For{num. grad steps}
        \State Randomly sample mini-batch $\mathcal{B}_{\text{on}}: \{ o_t, a_t, r_t, o_{t+1}\}_{t=0}^T$ from $\mathcal{D}_{on}$ and $\mathcal{B}_{\text{retrieval}}: \{ o_t, a_t, r_t, o_{t+1}\}_{t=0}^T$ from $\mathcal{D}_{\text{retrieval}}$.
        \State Update world model $f_\theta, q_\theta, p_\theta, d_\theta$ by minimizing \cref{eq:wm objective} on sampled batch $\{ \mathcal{B_{\text{on}}, \mathcal{B}_{\text{retrieval}}}\}$.
        \State Update $r_\xi$ by minimizing $-\frac{1}{N} \sum_{i=0}^N \log p_\xi(r_t|s_t)$ on $\mathcal{B_{\text{on}}}$. ~~~~~~~~~~~~~~~~~~~~~~~{\color{vibrantblue}$\vartriangleleft$ \emph{$s_t = [h_t,z_t]$}}

        \State {\color{vibrantpurple} \emph{// Update policy and value function}}
        \State Generate imaginary trajectories $\tilde{\tau} = \{s_t, a_t, s_{t+1}\}_{t=0}^T$ by rolling out $h_\theta, p_\theta$ with $\pi_{\phi_{\text{RL}}}$.
        \State Update policy $\pi_{\phi_{\text{RL}}}$ and value function $v_\phi$ with \cref{eq:actor}.
    \EndFor
\EndFor

\end{algorithmic}
\end{algorithm}

%%%%%%%%%%% Full results %%%%%%%%
\clearpage
\section{Full Results}
\label{appendix:full_results}
In~\cref{tab:result_mw1} and~\cref{tab:result_mw2}, we list the success rate of 50 Meta-World benchmark tasks with pixel inputs. In~\cref{tab:result_dmc}, we list the episodic return of DMControl of 22 tasks. We compare NCRL at 150k samples with two widely used baselines DreamerV3 and DrQ-v2 at both 150k samples and 1M samples. 
We report the results over 5 random seeds for NCRL and 3 random seeds for DreamerV3 and DrQ-v2. The best results are marked with a bold font at 150k samples and the highest overall scores are marked with underline. The detailed result curves of both Meta-World and DMControl are shown in \cref{fig:full_mw1}, \cref{fig:full_mw2}, and \cref{fig:full_dmc}.

\subsection{Meta-World Benchmark}

\begin{table*}[ht]
    \caption{Success rate of Meta-World benchmark with pixel inputs.}
    \label{tab:result_mw1}
    \centering
    \small
    \begin{tabularx}{\textwidth}{cYY|YYY}
        \specialrule{1pt}{1pt}{2.5pt}
        Tasks & \makecell{DreamerV3 \\ @ \textcolor{vibrantblue}{\textbf{1M}}} & \makecell{DrQ-v2\\ @ \textcolor{vibrantblue}{\textbf{1M}}} &  \makecell{DreamerV3 \\ @ \textcolor{vibrantred}{\textbf{150k}}}  & \makecell{DrQ-v2 \\  @ \textcolor{vibrantred}{\textbf{150k}}} & \makecell{NCRL \\  @ \textcolor{vibrantred}{\textbf{150k}}} \\
        \specialrule{1pt}{1pt}{2.5pt}
        Assembly  & 0.0 & 0.0 & 0.0 & 0.0 & \underline{\textbf{0.44}} \\
        \midrule
        Basketball  & 0.0 & \underline{0.97} & 0.0 & 0.0 & \textbf{0.36} \\
        \midrule
        Bin Picking & 0.0 & \underline{0.93} & 0.0 & 0.33 & \textbf{0.84} \\
        \midrule
        Box Close & 0.13 & \underline{0.9} & 0.0 & 0.0 & \textbf{0.88} \\
        \midrule
        Button Press  & \underline{1.0} & 0.7 & 0.47 & 0.13 & \textbf{0.76} \\
        \midrule
        \makecell{Button Press \\ Topdown}  & \underline{1.0} & \underline{1.0} & 0.33 & 0.17 & \underline{\textbf{1.0}} \\
        \midrule
        \makecell{Button Press \\ Topdown Wall}  & \underline{1.0} & \underline{1.0} & 0.73 & 0.63 & \underline{\textbf{1.0}} \\
        \midrule
        \makecell{Button Press Wall}  & \underline{1.0} & \underline{1.0} & 0.93 & 0.77 & \underline{\textbf{1.0}} \\
        \midrule
        Coffee Button  & 1.0 & 1.0 & 1.0 & 1.0 & 1.0 \\
        \midrule
        Coffee Pull  & 0.6 & \underline{0.8} & 0.0 & \textbf{0.6} & 0.56 \\
        \midrule
        Coffee Push  & 0.67 & \underline{0.77} & 0.13 & 0.2 & \textbf{0.72} \\
        \midrule
        Dial Turn  & \underline{0.67} & 0.43 & 0.13 & 0.17 & \textbf{0.65} \\
        \midrule
        Disassemble  & 0.0 & 0.0 & 0.0 & 0.0 & 0.0 \\
        \midrule
        Door Close  & - & - & - & - & 1.0 \\
        \midrule
        Door Lock  & \underline{1.0} & 0.93 & 0.6 & \textbf{0.97} & 0.96 \\
        \midrule
        Door Open  & \underline{1.0} & 0.97 & 0.0 & 0.0 & \textbf{0.92} \\
        \midrule
        Door Unlock  &  1.0 & 1.0 & \underline{\textbf{1.0}} & 0.63 & 0.92 \\
        \midrule
        Drawer Close  & 0.93 & 1.0 & 0.93 & \underline{\textbf{1.0}} & 0.92 \\
        \midrule
        Drawer Open  & 0.67 & 0.33 & 0.13 & 0.33 & \underline{\textbf{1.0}} \\
        \midrule
        Faucet Open  & 1.0 & 1.0 & 0.47 & 0.33 & \underline{\textbf{1.0}} \\
        \midrule
        Faucet Close  & 0.87 & 1.0 & \underline{\textbf{1.0}} & \underline{\textbf{1.0}} & 0.92 \\
        \midrule
        Hammer  & 1.0 & 1.0 & 0.07 & 0.4 & \underline{\textbf{1.0}} \\
        \midrule
        Hand Insert  & 0.07 & \underline{0.57} & 0.0 & 0.1 & \textbf{0.44} \\
        \midrule
        Handle Press Side  & 1.0 & 1.0 & 1.0 & 1.0 & \underline{\textbf{1.0}} \\
        \midrule
        Handle Press  & 1.0 & 1.0 & 0.93 & 0.97 & \underline{\textbf{1.0}} \\
        \midrule
        Handle Pull Side  & 0.67 & 1.0 & 0.67 & 0.6 & \underline{\textbf{1.0}} \\
        \midrule
        Handle Pull  & 0.67 & 0.6 & 0.33 & 0.6 & \underline{\textbf{0.85}} \\
        \midrule
        Lever Pull  & 0.73 & \underline{0.83} & 0.0 & 0.33 & \textbf{0.72} \\
        \specialrule{1pt}{1pt}{2.5pt}
        
        More results see~\cref{tab:result_mw2} \\
        \specialrule{1pt}{1pt}{2pt}
    \end{tabularx}
\end{table*}

\begin{table*}[ht]
    \caption{Success rate of Meta-World benchmark with pixel inputs (Cont.).}
    \label{tab:result_mw2}
    \centering
    \small
    \begin{tabularx}{\textwidth}{cYY|YYY}
        \specialrule{1pt}{1pt}{2.5pt}
        Tasks  & \makecell{DreamerV3 \\ @ \textcolor{vibrantblue}{\textbf{1M}}} & \makecell{DrQ-v2\\ @ \textcolor{vibrantblue}{\textbf{1M}}} &  \makecell{DreamerV3 \\  @ \textcolor{vibrantred}{\textbf{150k}}}  & \makecell{DrQ-v2 \\  @ \textcolor{vibrantred}{\textbf{150k}}} & \makecell{NCRL (ours) \\  @ \textcolor{vibrantred}{\textbf{150k}}} \\
        \specialrule{1pt}{1pt}{2.5pt}
        
        Peg Insert Side  & 1.0 & 1.0 & 0.0 & 0.27 & \underline{\textbf{1.0}}\\
        \midrule
        Peg Unplug Side  & \underline{0.93} & 0.9 & \textbf{0.53} & 0.5 & 0.48 \\
        \midrule
        \makecell{Pick Out of Hole}  & 0.0 & 0.27 & 0.0 & 0.0 & \underline{\textbf{0.25}} \\
        \midrule
        \makecell{Pick Place Wall}  & 0.2 & 0.17 & 0.0 & 0.0 & \underline{\textbf{0.64}} \\
        \midrule
        Pick Place & \underline{0.67} & \underline{0.67} & 0.0 & 0.0 & \textbf{0.20} \\
        \midrule
        \makecell{Plate Slide \\ Back Side}  & 1.0 & 1.0 & 0.93 & 1.0 & \underline{\textbf{1.0}} \\
        \midrule
        Plate Slide Back  & 1.0 & 1.0 & 0.8 & 0.97 & \underline{\textbf{1.0}} \\
        \midrule
        Plate Slide Side  & \underline{1.0} & 0.9 & \textbf{0.73} & 0.5 & 0.52 \\
        \midrule
        Plate Slide  & 1.0 & 1.0 & 0.93 & \underline{\textbf{1.0}} & 0.95 \\
        \midrule
        Push Back  & \underline{0.33} & \underline{0.33} & 0.0 & 0.0 & \textbf{0.32} \\
        \midrule
        Push Wall  & 0.33 & 0.57 & 0.0 & 0.0 & \underline{\textbf{0.84}} \\
        \midrule
        Push  & 0.26 & \underline{0.93} & 0.0 & 0.13 & \textbf{0.72} \\
        \midrule
        Reach  & \underline{0.87} & 0.73 & \textbf{0.67} & 0.43 & 0.40 \\
        \midrule
        Reach Wall  & \underline{1.0} & 0.87 & 0.53 & 0.7 & \textbf{0.80} \\
        \midrule
        Shelf Place  & 0.4 & 0.43 & 0.0 & 0.0 & \underline{\textbf{0.80}} \\
        \midrule
        Soccer  & 0.6 & 0.3 & 0.13 & 0.13 & \underline{\textbf{0.16}}\\
        \midrule
        Stick Push  & 0.0 & 0.07 & 0.0 & 0.0 & \underline{\textbf{0.64}} \\
        \midrule
        Stick Pull  & 0.0 & 0.33 & 0.0 & 0.0 & \underline{\textbf{0.52}} \\
        \midrule
        Sweep Into  & 0.87 & \underline{1.0} & 0.0 & \textbf{0.87} & 0.72 \\
        \midrule
        Sweep  & 0.0 & \underline{0.73} & 0.0 & 0.3 & \textbf{0.64} \\
        \midrule
        Window Close  & 1.0 & 1.0 & 0.93 & \underline{\textbf{1.0}} &  \underline{\textbf{1.0}} \\
        \midrule
        Window Open  & 1.0 & 0.97 & 0.6 & \underline{\textbf{1.0}} & 0.96 \\

        \specialrule{1pt}{1pt}{2.5pt}
        \textbf{Mean}   & \underline{0.656} & 0.753 & 0.360 & 0.430 & \textbf{0.748} \\
        \midrule
        \textbf{Medium}  & 0.870 & \underline{0.900} & 0.130 & 0.330 & \textbf{0.840} \\
        \specialrule{1pt}{1pt}{2pt}
    \end{tabularx}
\end{table*}

\newpage
\begin{figure}[ht]

\centering
\small
\includegraphics[width=0.95\textwidth]{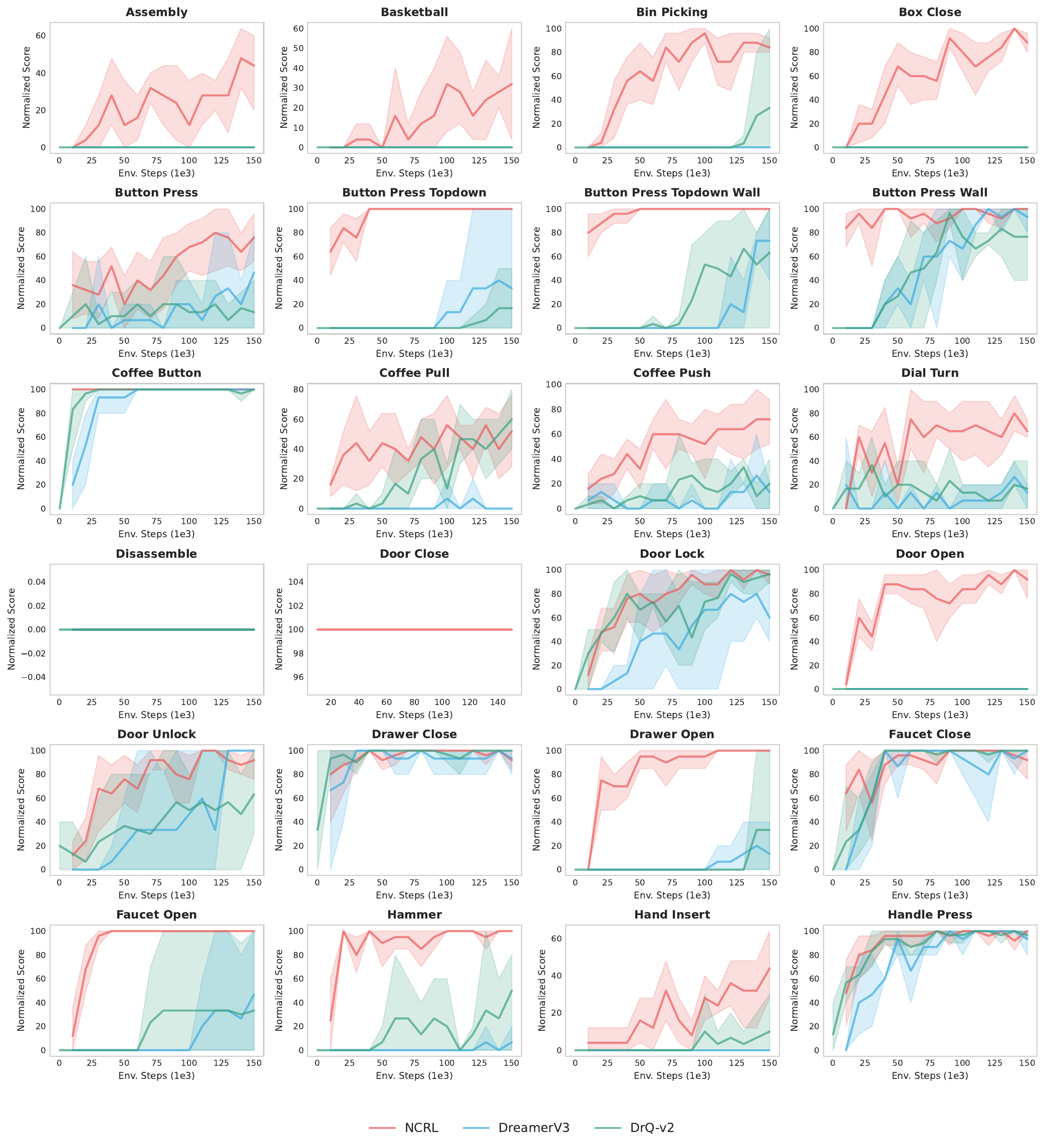}
\caption{Meta-World results. We report 5 seeds for NCRL and 3 seeds for DrQ-v2 and DreamerV3.}
\label{fig:full_mw1}
\end{figure}

\begin{figure}[ht]
\centering
\small
\includegraphics[width=0.95\textwidth]{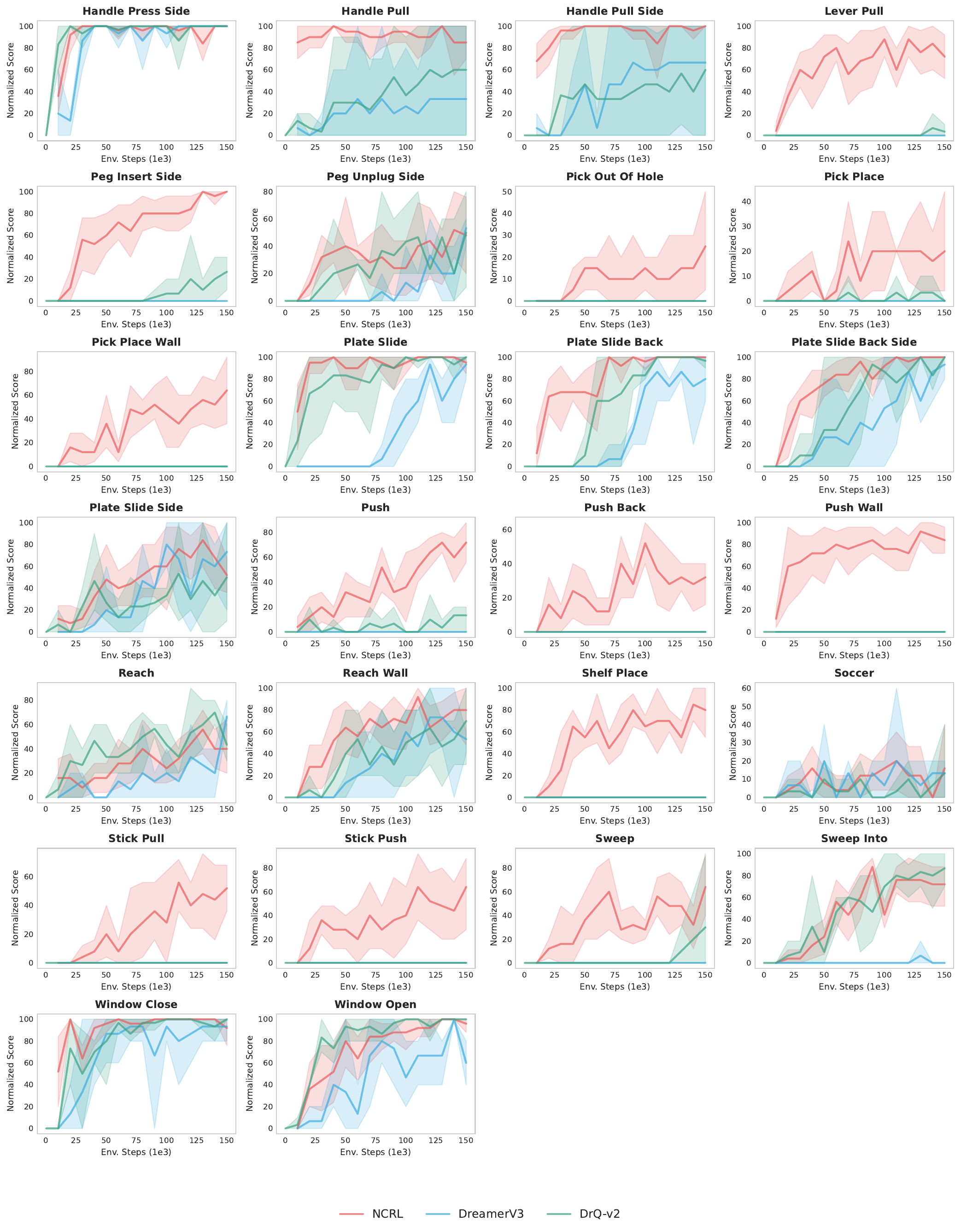}
\caption{Meta-World results (Cont.). We report 5 seeds for NCRL and 3 seeds for DrQ-v2 and DreamerV3.}
\label{fig:full_mw2}
\end{figure}

\clearpage
\subsection{DMControl Benchmark}

\begin{table*}[ht]
    \caption{Episodic return of DMControl benchmark with pixel inputs.}
    \label{tab:result_dmc}
    \centering
    \begin{tabularx}{\textwidth}{cYY|YYY}
        \specialrule{1pt}{1pt}{2.5pt}
        Tasks  & \makecell{DreamerV3 \\ @ \textcolor{vibrantblue}{\textbf{500k}}} & \makecell{DrQ-v2\\ @ \textcolor{vibrantblue}{\textbf{500k}}} &  \makecell{DreamerV3 \\  @ \textcolor{vibrantred}{\textbf{150k}}}  & \makecell{DrQ-v2 \\  @ \textcolor{vibrantred}{\textbf{150k}}} & \makecell{NCRL(ours) \\  @ \textcolor{vibrantred}{\textbf{150k}}} \\
        \specialrule{1pt}{1pt}{2.5pt}
        CartPole Balance &  994.3 &  992.3 & 955.8 & 983.3 & \underline{\textbf{995.0}} \\
        \midrule
        Acrobot Swingup  & \underline{222.1} & 30.3 & \textbf{85.2} & 20.8 & 84.6 \\
        \midrule
        \makecell{Acrobot Swingup \\ Sparse}  & 2.5 & 1.17 & 1.7 & 1.5 & \underline{\textbf{12.2}} \\
        \midrule
        \makecell{Acrobot Swingup \\ Hard}  & -0.2 & 0.3 & \underline{\textbf{2.0}} & 0.4 & -17.1 \\
        \midrule
        Walker Stand  & 965.7 & 947.6 & 946.2 & 742.9 & \underline{\textbf{974.1}} \\
        \midrule
        Walker Walk  & 949.2 & 797.8 & 808.9 & 280.1 & \underline{\textbf{960.5}} \\
        \midrule
        Walker Run  & 616.6 & 299.3 & 224.4 & 143.0 & \underline{\textbf{707.7}} \\
        \midrule
        Walker Backflip  & \underline{293.6} & 96.7 & 128.2 & 91.7 & \textbf{266.1} \\
        \midrule
        \makecell{Walker Walk \\ Backward}  & \underline{942.9}  & 744.3 & 625.9 & 470.9 & \textbf{887.6} \\
        \midrule
        \makecell{Walker Walk \\ Hard}  & -2.1 & -9.5 & -4.7 & -17.1 & \underline{\textbf{842.8}} \\
        \midrule
        \makecell{Walker Run \\ Backward}  & 363.8 & 246.0 & 229.4 & 167.4 & \underline{\textbf{366.0}} \\
        \midrule
        Cheetah Run  & \underline{843.7} & 338.1 & \textbf{621.4} & 251.2 & 543.8 \\
        \midrule
        \makecell{Cheetah Run \\ Front}  & \underline{473.8} & 202.4 & 143.1 & 108.4 & \textbf{317.6} \\
        \midrule
        \makecell{Cheetah Run \\ Back}  & \underline{657.4} & 294.4 & 407.6 & 171.2 & \textbf{462.3} \\
        \midrule
        \makecell{Cheetah Run \\ Backwards}  & \underline{693.8} & 384.3 & \textbf{626.6} & 335.6 & 521.6 \\
        \midrule
        Cheetah Jump  & 597.0 & 535.6 & 200.8 & 251.8 & \underline{\textbf{614.2}} \\
        \midrule
        Quadruped Walk  & 369.3 & 258.1 & 145.2 & 76.5 & \underline{\textbf{855.6}} \\
        \midrule
        Quadruped Stand  & 746.0 & 442.2 & 227.2 & 318.9 & \underline{\textbf{941.4}} \\
        \midrule
        Quadruped Run  & 328.1 & 296.5 & 183.0 & 102.8 & \underline{\textbf{766.9}} \\
        \midrule
        Quadruped Jump  & 689.6 & 478.3 & 168.3 & 190.5 & \underline{\textbf{820.2}} \\
        \midrule
        Quadruped Roll  & 663.9 & 446.0 & 207.9 & 126.2 & \underline{\textbf{948.0}} \\
        \midrule
        \makecell{Quadruped Roll \\ Fast}  & 508.8 & 366.9 & 124.8 & 164.7 & \underline{\textbf{758.9}} \\
        \specialrule{1pt}{1pt}{2.5pt}
        \textbf{Mean}   & 541.81 & 372.23 & 320.86 & 226.49 & \underline{\textbf{617.73}} \\
        \midrule
        \textbf{Medium}   & 606.8 & 318.70 & 204.35 & 166.05 &\underline{\textbf{733.3}} \\
        \specialrule{1pt}{1pt}{2pt}
    \end{tabularx}
\end{table*}

\begin{figure}[ht]
\centering
\small
\includegraphics[width=0.95\textwidth]{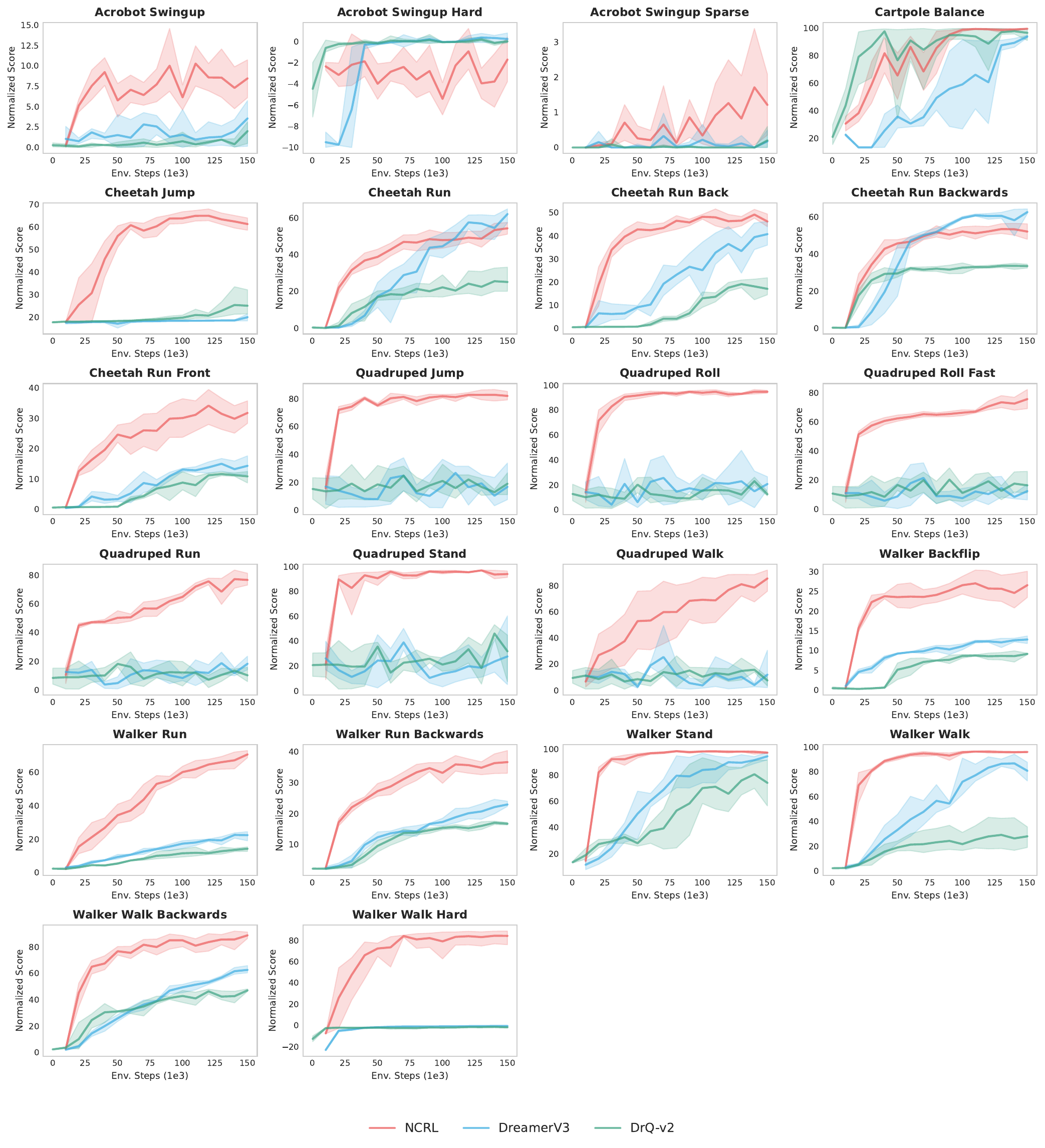}
\caption{DMControl results. We report 5 seeds for NCRL and 3 seeds for DrQ-v2 and DreamerV3.}
\label{fig:full_dmc}
\end{figure}
%%%%%%%%%%% Full results Ends %%%%%%%%%

%%%%%%%%%% Hyperparameters %%%%%%%%%%%5
\clearpage
\section{Hyperparameters} \label{appendix:hyperparameters}
In this section, we list important hyperparameters used in NCRL.

\begin{table*}[ht]
\centering
\caption{Hyperparameters used in NCRL.}
\begin{tabular}{lllll}
\cline{1-2}
\textbf{Hyperparameter}                    & \textbf{Value}  \\ \cline{1-2} 
\textbf{Pre-training} & \\
Stacked images & 1 \\
Pretrain steps & 200,000 \\
Batch size & 16 \\
Sequence length & 64 \\
Replay buffer capacity &  Unlimited \\
Replay sampling strategy & Uniform \\
RSSM \\
~~ Hidden dimension & 12288 \\
~~ Deterministic dimension &  1536 \\
~~ Stochastic dimension & 32 * 96 \\
~~ Block number & 8 \\
~~ Layer Norm & True \\
CNN channels & [96, 192, 384, 768] \\
Activation function & SiLU \\
Optimizer \\
~~ Optimizer & Adam \\
~~ Learning rate & 1e-4 \\
~~ Weight decay & 1e-6 \\
~~ Eps & 1e-5 \\
~~ Gradient clip & 100 \\
\cline{1-2}
\textbf{Fine-tuning} \\
Warm-up frames & 15000 \\
Execution Guidance Schedule & linear(1,0,50000) for DMControl \\
                            & linear(1,0,1,150000) for Meta-Wolrd \\
Action repeat & 2 \\
Offline data mix ratio & 0.25 \\
Discount & 0.99 \\
Discount lambda & 0.95 \\
MLPs & [512, 512, 512] \\
MLPs activation & SiLU \\
Actor critic learning rate & 8e-5 \\
Actor entropy coef & 1e-4 \\
Target critic update fraction & 0.02 \\
Imagine horizon & 16 \\

\cline{1-2}
\end{tabular}

\label{tab:hyperparameter}
\end{table*}

\clearpage
\section{Task Visualization}

\begin{figure}[H]
\label{appendix:task_visualization}
\centering
\small
\includegraphics[width=0.95\textwidth]{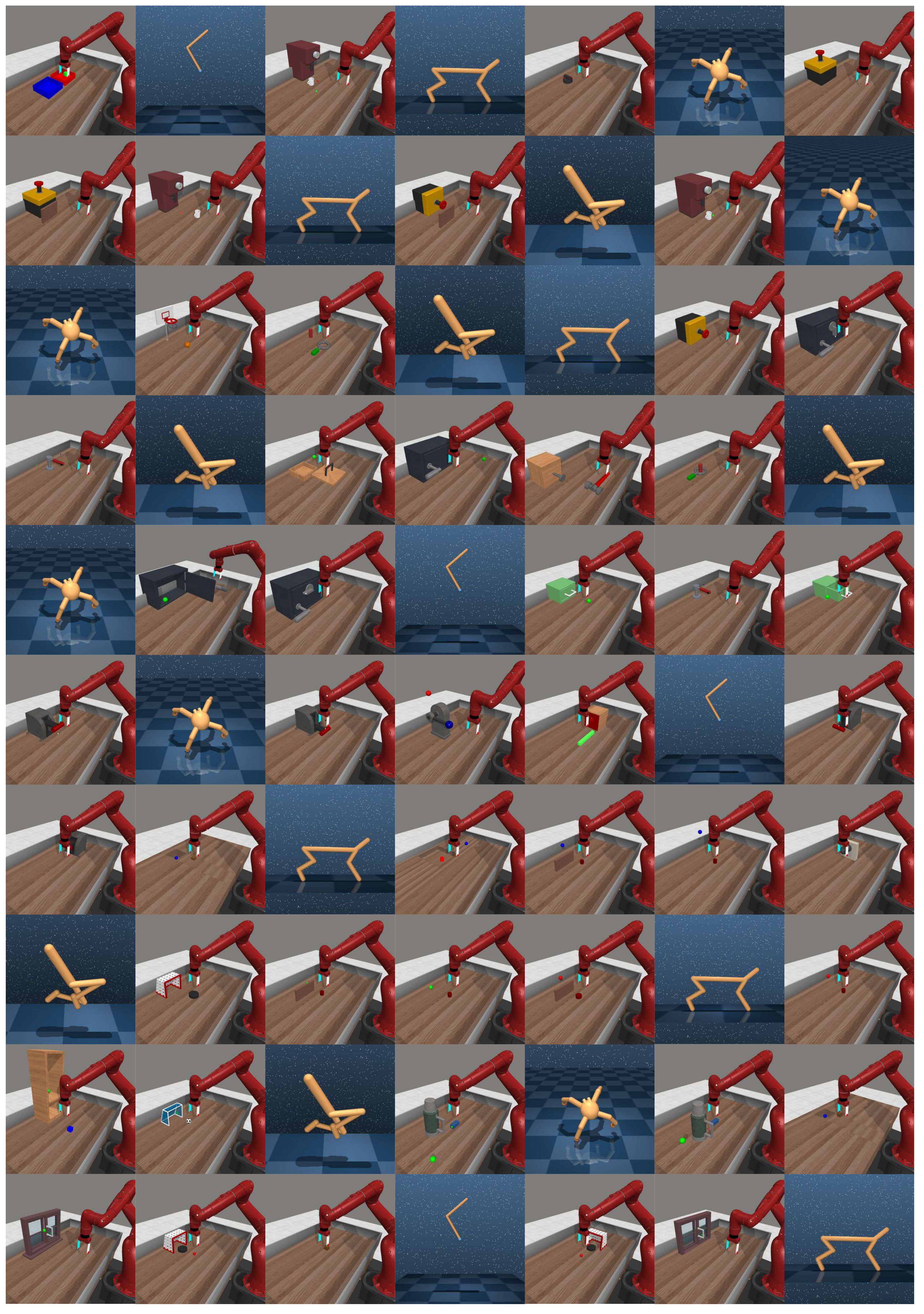}
\caption{Visualization of tasks from DMControl and Meta-World used in our paper.}
\end{figure}

\end{document}